\documentclass[10pt]{article}
\usepackage[accepted]{tmlr}

\usepackage{amsmath,amsfonts,bm}

\def\eqref#1{equation~\ref{#1}}

\def\1{\bm{1}}

\DeclareMathAlphabet{\mathsfit}{\encodingdefault}{\sfdefault}{m}{sl}
\SetMathAlphabet{\mathsfit}{bold}{\encodingdefault}{\sfdefault}{bx}{n}

\newcommand{\E}{\mathbb{E}}

\newcommand{\R}{\mathbb{R}}

\newcommand{\softmax}{\mathrm{softmax}}

\usepackage[hidelinks]{hyperref}
\usepackage{url}

\title{Synergistic Benefits of Joint Molecule Generation\\ and Property Prediction}

\author{\name Adam Izdebski 
    \email adam.izdebski@helmholtz-munich.de \\
    \addr Institute of AI for Health, Helmholtz Zentrum Munchen \\
    \addr Technical University of Munich, TUM School of Computation, Information and Technology \\
    \addr Faculty of Mathematics, Informatics and Mechanics, University of Warsaw
    \AND
    \name Jan Olszewski \\
    \addr Faculty of Mathematics, Informatics and Mechanics, University of Warsaw
    \AND
    \name Pankhil Gawade \\
    \addr Institute of AI for Health, Helmholtz Zentrum Munchen
    \AND
    \name Krzysztof Koras \\
    \addr Ardigen SA
    \AND
    \name Serra Korkmaz \\
    \addr Institute of AI for Health, Helmholtz Zentrum Munchen
    \AND
    \name Valentin Rauscher \\
    \addr Technical University of Munich
    \AND
    \name Jakub M. Tomczak \\ 
    \addr Eindhoven University of Technology
    \AND
    \textbf{Ewa Szczurek}
    \email ewa.szczurek@helmholtz-munich.de \\
    \addr Institute of AI for Health, Helmholtz Zentrum Munchen\\
    \addr Faculty of Mathematics, Informatics and Mechanics, University of Warsaw
}

\usepackage[table]{xcolor}         %
\usepackage{wrapfig}
\usepackage{adjustbox}
\usepackage{pifont} 
\usepackage{colortbl}
\usepackage{siunitx}
\DeclareSIUnit{\molar}{M}
\usepackage{bbm}

\let\tmlrAND\AND
\let\AND\relax
\RequirePackage{algorithm}
\RequirePackage{algorithmic}
\let\AND\tmlrAND

\usepackage{microtype}
\usepackage{graphicx}
\usepackage{subfigure}
\usepackage{booktabs}
\usepackage{multirow}
\usepackage{rotating}
\usepackage{wrapfig}

\usepackage{amsmath}
\usepackage{amssymb}
\usepackage{mathtools}
\usepackage{amsthm}

\theoremstyle{plain}
\newtheorem{theorem}{Theorem}[section]

\newtheorem{lemma}[theorem]{Lemma}
\newtheorem{corollary}[theorem]{Corollary}
\theoremstyle{definition}

\theoremstyle{remark}

\DeclareMathOperator*{\x}{{\mathbf{x}}}

\DeclareMathOperator{\X}{\mathcal{X}}

\DeclareMathOperator{\Y}{\mathcal{Y}}
\DeclareMathOperator{\D}{\mathcal{D}}

\newcommand*{\joint}{p_{\theta}(\x, y)}
\newcommand*{\generator}{p_{\theta}(\x)}

\DeclareMathOperator*{\Keys}{\mathbf{K}}
\DeclareMathOperator*{\Queries}{\mathbf{Q}}
\DeclareMathOperator*{\Values}{\mathbf{V}}
\DeclareMathOperator*{\Mask}{\mathbf{M}}
\DeclareMathOperator*{\TaskToken}{{\small \texttt{[TASK]}}}
\DeclareMathOperator*{\MaskToken}{{\small \texttt{ATT\_Type}}}
\DeclareMathOperator*{\LMToken}{{\small \texttt{[LM]}}}
\DeclareMathOperator*{\PredToken}{{\small \texttt{[PRED]}}}
\DeclareMathOperator*{\MLMToken}{{\small \texttt{[MLM]}}}

\usepackage{mathtools}
\newcommand{\defeq}{\vcentcolon=}

\definecolor{hyformer}{RGB}{230, 190, 255}
\definecolor{hyformer-ablation}{RGB}{120, 100, 160}
\definecolor{hyformer-inline}{RGB}{40, 30, 50}
\newcommand{\hha}{\cellcolor{hyformer-ablation!10}}
\newcommand{\hh}{\cellcolor{hyformer!58.5}}
\DeclareMathOperator*{\HyF}{{\textcolor{hyformer-inline}{\textsc{Hyformer}}}}

\begin{document}

\maketitle

\begin{abstract}
Modeling the joint distribution of data samples and their properties allows to construct a single model for both data generation and property prediction, with synergistic benefits reaching beyond purely generative or predictive models. However, training joint models presents daunting architectural and optimization challenges. Here, we propose $\HyF$,~a transformer-based joint model that successfully blends the generative and predictive functionalities, using an alternating attention mechanism and a joint pre-training scheme. We show that $\HyF$ is simultaneously optimized for molecule generation and property prediction, while exhibiting synergistic benefits in conditional sampling, out-of-distribution property prediction and representation learning. Finally, we demonstrate the benefits of joint learning in a drug design use case of discovering novel antimicrobial peptides.
\end{abstract}

\makeatletter
\begingroup
\renewcommand\thefootnote{}
\footnotetext{Code available at: \url{https://github.com/szczurek-lab/hyformer}}
\endgroup
\setcounter{footnote}{0}
\makeatother

\newpage

\section{Introduction}

Developing models that simultaneously excel in both generative and predictive tasks is a long-standing challenge in machine learning \citep{bishop1994novelty, jaakkola1998exploiting, lasserre2006principled}. Joint models, which unify these tasks, offer synergistic benefits, including improved control over the generative process of the model, improved predictive robustness towards unseen, e.g., newly generated or out-of-distribution (OOD) data, and learning representations predictive of high-level molecular features \citep{nalisnick2019hybrid, grathwohl2020classifier, cao2022deep, tomczak2022deep}. These benefits are crucial for applications such as drug design, where success depends on balancing the generation of novel molecules from unexplored regions of the chemical space coupled with robust property prediction extrapolating towards the newly generated molecules~\citep{grisoniChemicalLanguageModels2022, steshin2023lohipracticalmldrug, van2025molecular}. 

However, molecule generation and property prediction are predominantly approached in separation. This division persists even though transformer-based models are state-of-the-art across both tasks \citep{bagal2022molgpt, gao2024graph, irwin2022chemformer, xia2023molebert, zhou2023unimol}. A likely reason is that joint training poses daunting challenges, as combining a generative and a predictive part into a single model may over-regularize both parts \citep{lasserre2006principled} or cause gradient interference between the generative and predictive objectives~\citep{nalisnick2019hybrid}. As a result, molecular models continue to forgo the potential benefits of joint learning. This raises a natural question, whether one can \emph{develop a transformer-based joint model optimized for both generative and predictive performance, at the same time offering the synergistic benefits of joint learning?} 

To address this challenge, we introduce $\HyF$, a joint model that combines an autoregressive transformer decoder with a bidirectional transformer encoder in a single model with shared parameters. Upon training, we alternate between using the model as a decoder and as an encoder, with either a causal or bidirectional self-attention mechanism, alleviating problems typical for joint models. We evaluate the generative and predictive performance, as well as synergistic benefits of joint learning using $\HyF$ across a variety of molecular tasks \citep{wu2018moleculenet, brown2019guacamol, steshin2023lohipracticalmldrug, chen2023ampdiffusion}. Our contributions are:
\vspace{-0.2cm}
\begin{enumerate}
    \item We propose a novel joint model, $\HyF$, that unifies the generative and the predictive task in a single set of parameters.
    \item We demonstrate the synergistic benefits of joint modeling, where $\HyF$ outperforms baselines on (i) conditional molecule generation, (ii) out-of-distribution property prediction and (iii) molecular representation learning via probing.
    \item We show that $\HyF$ rivals the generative and predictive performance of state-of-the-art purely generative and predictive models.
    \item We showcase the applicability of joint modeling in a real-world drug design use case of discovering novel antimicrobial peptides.
\end{enumerate}

\section{Related Work}\label{section:related-work} 

\paragraph{Molecule Generation}

Existing generative approaches can be categorized into sequence- and graph-based models. Sequence-based methods represent molecules as SMILES~\citep{weiningerSMILESChemicalLanguage1988} or SELFIES~\citep{Krenn_2020} and process tokenized strings using recurrent or transformer-based language models~\citep{segler2018generating, flam2022language, bagal2022molgpt}. In contrast, graph-based models treat molecules as graphs and have been implemented using variational autoencoders~\citep{liu2018constrained, jin2019junction, maziarz2022learning, hetzel2023magnet}, normalizing flows~\citep{luo2021graphdf}, energy-based models~\citep{liu2021graphebm}, and graph transformers~\citep{gao2024graph}. More recently, 3D-based generative models have been proposed to capture the spatial geometry of molecules~\citep{hoogeboom2022equivariant, guan20233d, gao2024rethinking}, however real world drug discovery pipelines continue to rely predominantly on 2D-molecular representations~\citep{xiang20243d}.

\paragraph{Molecular Property Prediction}
Analogously, prediction models leverage distinct molecular representations. Methods based on pre-trained language models predominantly work with SMILES~\citep{wang2019smiles, fabian2020molecular, irwin2022chemformer, sultan2024transformers}, while other approaches represent molecules as graphs \citep{li2021effective, wang2022molecular}. Recent methods leverage the three-dimensional spatial structure of a molecule, either using graph neural networks \citep{fang2022geometry} or transformers \citep{zhou2023unimol}. Finally, \citet{yang2019analyzing, fabian2020molecular, stokes2020deep}
incorporate pre-computed physicochemical descriptors of molecules into training. 

\paragraph{Joint Models for Molecules} Early joint models combine variational autoencoders with latent-space predictors~\citep{gomez-bombarelliAutomaticChemicalDesign2018, maziarz2022learning}. Regression Transformer~\citep{born2023regression} frames property prediction as conditional sequence generation, but lacks unconditional generative capability. Graph2Seq~\citep{gao2024graph} is a graph-based encoder-decoder transformer, trained separately as a generative or as a predictive model, but evaluated on both molecule generation and property prediction. UniGEM~\citep{feng2024unigem} is a diffusion-based model for unified generation and prediction, however specializing in 3D molecular modeling and not directly applicable to standard SMILES-based benchmarks.

Therefore, the question of whether the transformer architecture can be used to implement a joint model for both SMILES-based generation and prediction, while enjoying synergistic benefits,~remains~open.

\section{Background}\label{section:joint-modeling}

\paragraph{Problem Formulation}\label{section:problem-statement}

\label{paragraph:joint-modeling-problem-statement} 
The aim of \emph{joint modeling} is to learn the joint distribution of the data and its properties $p(\x, y)$, i.e.,\ to identify a model that at the same time generates new data and predicts its properties. We assume access to a \emph{labeled dataset} ${\D} = \{(\x_n, y_n)\}_{n=1}^{N}$,  sampled from the joint data distribution $p(\x, y)$, often accompanied with an \emph{unlabeled dataset} ${\D}_U = \{\x_n\}_{n=1}^{N_U}$, sampled from $p(\x)$. Here, examples $\x$ can be thought of as molecules and labels $y$ as molecular properties. 

In the general formulation of \citet{lasserre2006principled}, joint modeling aims to learn the joint distribution $p(\x, y)$ by defining a \emph{joint model} $p_{\theta, \phi}(\x, y)$ that factorizes into a \emph{generative model} $p_\theta(\x)$ and a \emph{predictive model} $p_{\phi}(y \mid \x)$ such that
\begin{equation}\label{equation:joint-model}
    p_{\theta, \phi}(\x, y) = p_{\phi}(y \mid \x)p_{\theta}(\x) ,
\end{equation} 
where $\theta$ denotes the parameters of the generative model, and $\phi$ %
the parameters of the predictive model. Training of the joint model is equivalent to minimizing the negative log-likelihood,
i.e., the \emph{joint loss} 
\begin{equation}\label{equation:joint-loss}
    \ell_{\lambda}(\theta, \phi) = -\E_{(\x, y) \sim p(\x, y)} [\ln p_\theta(\x) + \lambda \ln p_{\phi}(y \mid {\x})],
\end{equation}
where $\lambda \in \mathbb{R}$ weights the predictive and the generative parts.

Choosing the extent to which parameters $\theta$ and $\phi$ are shared and the way the joint loss is optimized,~is crucial for obtaining a model with both a high generative and predictive performance, at the same time maintaining the synergistic benefits of joint learning~\citep{lasserre2006principled}. 

\subsection{Transformer-based Models}\label{sect:transformers_joint_modeling}
Transformers \citep{vaswaniAttentionAllYou2017} achieve state-of-the-art performance in both molecule generation \citep{bagal2022molgpt} and property prediction \citep{zhou2023unimol} tasks.  

\paragraph{Transformer Encoders and Decoders}
Transformers used for generation and for property prediction differ in the use of the \emph{self-attention} mechanism. Transformer decoders, used for generative tasks, employ a \emph{causal self-attention}
\begin{equation}\label{equation:self-attention-causal}
    Att_\rightarrow(\Queries, \Keys, \Values) = \softmax\left(\frac{\Queries\Keys^T}{\sqrt{d}} + {\Mask}_\rightarrow \right)\Values,
\end{equation}
where $\Queries, \Keys, \Values \in \R^{T \times d}$ are \emph{query}, \emph{key} and \emph{value} matrices, respectively, ${\Mask}_{\rightarrow} \in \R^{T \times T}$ is a \emph{causal mask}, i.e.,\ a matrix such that $(\Mask_{\rightarrow})_{ij} = 0$, if $i \geq j$, and $(\Mask_{\rightarrow})_{ij} = -\infty$, otherwise, $T$ is the sequence length and $d$ is the head dimension.\footnote{We assume that the dimensions of the query, key, and value matrices are equal.} On the other hand, transformer encoders, used for predictive tasks, employ a \emph{bidirectional self-attention}%
\begin{equation}\label{equation:self-attention-bidirectional}
    Att_\leftrightarrow(\Queries, \Keys, \Values) = \softmax\left(\frac{\Queries\Keys^T}{\sqrt{d}} + {\Mask}_\leftrightarrow \right)\Values,
\end{equation}
where ${\Mask}_\leftrightarrow \in \R^{T \times T}$ is a \emph{bidirectional mask}, i.e., $({\Mask}_\leftrightarrow)_{ij} = 0$ for all $i,j \in [T]$.

\paragraph{Alternating attention} The definition of the transformer decoder and encoder can be generalized by using an alternating attention scheme~\citep{dong2019unified}:
\begin{equation}
        Att_{\MaskToken}(\Queries, \Keys, \Values) = \softmax\left(\frac{\Queries \Keys^T}{\sqrt{d}} + \mathbf{M}_{\MaskToken} \right) \Values,    
\end{equation}
where $\MaskToken \in\{\rightarrow, \leftrightarrow\}$ and  $\mathbf{M}_{\MaskToken} = \mathbf{M}_{\rightarrow}$ is a causal mask upon using the model as a transformer decoder and $\mathbf{M}_{\MaskToken} = \mathbf{M}_{\leftrightarrow}$, otherwise.

\paragraph{Training transformers}
Training transformers proceeds in a two-step manner, by first \emph{pre-training} the model on an unlabeled dataset and then \emph{fine-tuning} the pre-trained model on a downstream task. Transformer decoders and encoders are pre-trained using different losses. 

\paragraph{Pre-training} Transformer decoders, optimized for generative performance, are predominantly pre-trained using the negative log-likelihood loss $- \E_{\x \sim p(\x)} [\ln p_\theta(\x)]$. 
As the causal mask induces a factorization of the transformer decoder into an autoregressive model $p_\theta(\x) = \prod_{t=1}^T p_\theta(x_t \mid {\x}_{<t})$, where~$\x~=~(x_1, \ldots, x_T)$, the generative loss reduces to the \emph{language modeling} (LM) loss 
\begin{equation}\label{eq:language-modeling-loss}
\ell_{\textsc{LM}}(\theta) = - \E_{\x \sim p(\x)} \left[\sum_{t=1}^T \ln p_\theta(x_t \mid {\x}_{<t})\right]. 
\end{equation}
On the other hand, transformer encoders are usually pre-trained using \emph{masked language modeling} (MLM) loss
\begin{equation}\label{eq:masked-reconstruction-loss}
    \ell_{\textsc{MLM}}(\theta) =
    - \mathbb{E}_{\x \sim p(\x)} \mathbb{E}_{\mathcal{M}}\Big[ \ln p_{\theta}({\x}_{\mathcal{M}} \mid {\x}_{\mathcal{R}}) \Big], 
\end{equation}
where $\x = (x_1, \ldots, x_T)$, $\mathcal{M}$ is a set of indices drawn uniformly at random from the set of token indices $\{1, \ldots, T \}$ and the set of all tokens whose indices belongs to $\mathcal{M}$ are \emph{masked tokens} ${\x}_\mathcal{M}$. The rest of the tokens ${\x}_{\mathcal{R}}$ are defined such that $\x = {\x}_\mathcal{M} \cup {\x}_{\mathcal{R}}$.

\paragraph{Fine-tuning} Next, the pretrained model is fine-tuned by defining a predictive head on top of the pretrained model and training it as a predictor on a labeled dataset using the \emph{prediction loss}
\begin{equation}\label{equation:predictive-loss}
    \ell_{\textsc{pred}}(\phi) = - \E_{(\x, y) \sim p(\x, y)} [\ln p_\phi(y \mid \x)].
\end{equation} 

\begin{figure}
    \centering
    \includegraphics[width=0.5\linewidth]{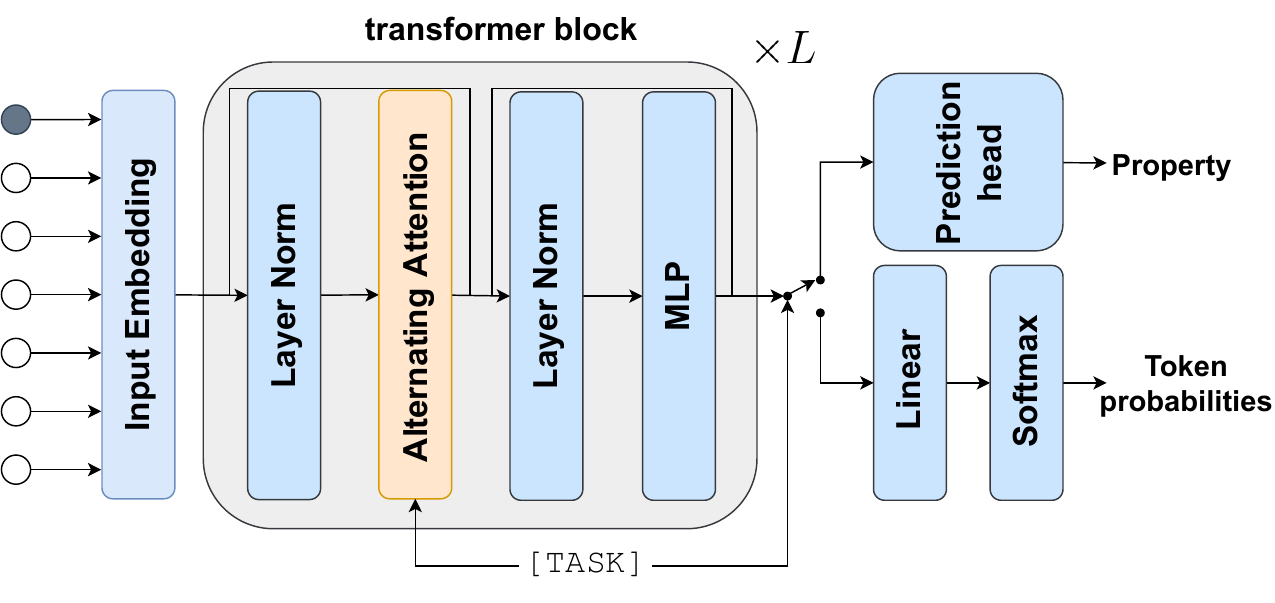}
    \caption{A schematic representation of $\HyF$. Depending on the task token $\TaskToken$, $\HyF$ uses either a causal or a bidirectional mask, outputting token probabilities or predicted property values.}
   \label{fig:jointformer-schematic}
\end{figure}

\section{Hyformer}\label{section:Hyformer}

We propose \textsc{Hyformer}, a joint transformer-based model that unifies a generative decoder with a predictive encoder in a single set of shared parameters, using an alternating training scheme.

\subsection{Model Formulation}

$\HyF$ unifies a decoder with an encoder using a transformer backbone $f_{\theta}(\x; \TaskToken)$ conditioned on a \emph{task token} $\TaskToken \in \{ \LMToken,\PredToken,\MLMToken \}$. The task token facilitates switching between respective losses during training (see Section~\ref{section:hybrid-transformer-training}) and determines whether the backbone $f_\theta$ processes input $\x$ in an autoregressive manner using a causal, or a bidirectional mask
\[
\MaskToken = 
\begin{cases}
  \rightarrow \quad & \text{if } \TaskToken = \LMToken, \\
  \leftrightarrow \quad & \text{if } \TaskToken \in \{ \PredToken, \MLMToken \}.
\end{cases}
\]

Finally, the generative $p_\theta(\x)$ and predictive  $p_\theta(y \mid \x)$ parts of the joint model, factorized as
\begin{align}
    &p_\theta(\x, y) \defeq p_\theta(\x) p_\theta(y \mid \x),
\end{align}
are implemented by adding a generative and a predictive head on the top of the shared backbone $f_\theta$.

\begin{algorithm}[h]
	\caption{Training of $\HyF$}
    \label{algorithm:jointformer-pretraining}
    \let\AND\relax
	\begin{algorithmic}[1]
		\REQUIRE Dataset $\mathcal{D}$ (labeled or unlabeled); model parameters $\theta$; task probabilities $\mathbf{p}_{\TaskToken}$.\\
        For pre-training: $\TaskToken \in \{\LMToken, \PredToken, \MLMToken\}$, for fine-tuning: $\TaskToken \in \{\LMToken, \PredToken\}$.
        \WHILE{stopping criterion not met}
        \STATE Sample task $\TaskToken \sim \textsc{Cat}(\mathbf{p}_{\TaskToken})$
        \STATE Select loss $\ell_{\TaskToken}$ and the corresponding attention mask
        \STATE Update model parameters $\theta$ using the gradient of $\ell_{\TaskToken}$
        \ENDWHILE
	\end{algorithmic} 
\end{algorithm}

\subsection{$\HyF$ Training}\label{section:hybrid-transformer-training}

As with standard transformer-based models, the training of $\HyF$ is divided into a pre-training and a fine-tuning stage.

\paragraph{Joint Pre-training}
To unify the generative and the predictive functionalities in a single model, we pre-train $\HyF$ using a variant of the joint loss (Eq.~\ref{equation:joint-loss}). For the generative part, we use the language modeling loss $\ell_{\textsc{LM}}$,
while for the predictive part, we use the masked language modeling loss $\ell_{\textsc{MLM}}$ and the predictive loss $\ell_{\textsc{pred}}$, with the combined loss being defined as:
\begin{equation}\label{eq:reconstruction-loss}
    \ell_{\textsc{Hyformer}} = \ell_{\textsc{LM}} + \mu \ell_{\textsc{MLM}} + \eta \ell_{\textsc{PRED}}.
\end{equation}
As \emph{pre-training labels}, we use values analytically computable from the input sequences, e.g., molecular descriptors, such as molecular weight for small molecules, or hydrophobicity for peptides. When the pre-training labels are not available, $\HyF$ is pre-trained without the predictive loss $\ell_{\textsc{PRED}}$. Analogously to multitask learning~\citep{raffel2023exploring}, the weighted loss $ \ell_{\textsc{Hyformer}}$ (Eq.~\ref{eq:reconstruction-loss}) is effectively implemented  using a vector of task probabilities $\mathbf{p}_{\TaskToken} = (p_{\LMToken}, p_{\MLMToken}, p_{\PredToken})$, which defines how the generative and predictive capabilities of the joint model are balanced.

During training, the shared parameters $\theta$ are updated differently depending on the task token. If $\TaskToken \in \{\PredToken, \MLMToken\}$, a bidirectional attention mask $\mathbf{M}_{\leftrightarrow}$ is applied and all attention module weights are updated, since the bidirectional mask does not restrict information flow. Conversely, if $\TaskToken = \LMToken$, a causal mask $\mathbf{M}_{\rightarrow}$ is applied, restricting each token to attend only to its left context, altering the gradients of the attention module, due to the functional form of the Jacobian of the softmax function, alleviating gradient interference typical for joint modeling (Appendix~\ref{appendix:proofs-gradient-interference}).

\paragraph{Fine-tuning} We fine-tune $\HyF$ using the joint loss (Eq.~\ref{equation:joint-loss}), defined as 
\begin{equation}
    \ell_{\textsc{Hyformer}} = \ell_{\textsc{LM}}+ \lambda \ell_{\textsc{pred}}.
\end{equation}
Analogously to pre-training, $\HyF$ alternates between the generative and predictive task, to balance their objectives, based on a pre-defined vector of task probabilities $\mathbf{p}_{\TaskToken}~=~(p_{\LMToken}, p_{\textsc{[pred]}})$. We assume that {\textit{fine-tuning labels}} used in loss $\ell_{\textsc{pred}}$ are different than in the pre-training phase and are defined by the downstream prediction task. Specifically, we omit the masked language modeling loss, to focus on the downstream task while retaining the generative capabilities of the model.

\subsection{Sampling}\label{section:sampling} 
Sampling from $\HyF$ exploits the generative $p_{\theta}(\x)$ and predictive part $p_{\theta}(y \mid \x)$ depending on the sampling mode: unconditional or conditional.

\paragraph{Unconditional Generation} In unconditional generation, we sample $\x \sim p_{\theta}(\x)$ using the autoregressive part of the model. This addresses a limitation of conditionally trained generative models \citep{bagal2022molgpt} and joint models trained without a pure unsupervised objective \citep{born2023regression}, where generating a single example requires conditioning on a fixed property value inferred from a dataset.  

\paragraph{Conditional Generation} To generate $(\x, y) \sim p_\theta(\x, y)$ that satisfies a condition $Y \subseteq \mathcal{Y}$, $\HyF$ samples $K$-many examples ${\x}_1, \ldots, {\x}_K \sim \generator$ and, for every $k=1, \ldots, K$, accepts sample ${\x}_k$, if the predictor $p_\theta(y \mid \x)$ classifies ${\x}_k$ as having property $Y$. As a simple consequence of the Bayes rule, the above procedure yields a correct conditional sampling procedure, as
\begin{equation}
    p(\x \mid y \in Y) \propto p(y \in Y \mid \x)p(\x),
\end{equation}
for $y \in Y \subseteq \mathcal{Y}$ such that $p(y \in Y) >0$. Note that the conditional sampling procedure of $\HyF$ is a variant of best-of-$K$ sampling, a provably near-optimal solution to the KL-regularized RL problem \citep{yang2019analyzing} that has been shown to outperform other conditional sampling methods for LLMs, including state-of-the-art reinforcement learning methods like PPO and DPO \citep{touvron2023llama, mudgal2023controlled, gao2023scaling, rafailov2023direct}. Crucially, $\HyF$ leverages a jointly trained predictor $p_\theta(y \mid x)$ over a unified representation space, resulting in tighter alignment between generation and control. This coherence is particularly valuable in drug discovery, where the primary objective is not throughput, but \emph{precision and sample efficiency}, that is, generating a small number of high-quality candidates with minimal false positives.

\section{Experiments}\label{section:experiments}

We evaluate $\HyF$ across a broad range of molecular modeling tasks. First, we demonstrate the synergistic benefits of joint modeling in three settings: (i) conditional generation on GuacaMol dataset \citep{brown2019guacamol}, (ii) out-of-distribution (OOD) property prediction on Hit Identification task from the Lo-Hi benchmark \citep{steshin2023lohipracticalmldrug} and (iii) representation learning via probing on MoleculeNet benchmark \citep{wu2018moleculenet}. Subsequently, we show that $\HyF$ rivals state-of-the-art generative and predictive models in both unconditional generation on GuacaMol and property prediction on MoleculeNet. Finally, we apply $\HyF$ to antimicrobial peptide (AMP) design, showcasing the benefits of our joint modeling approach. Experimental details and additional results are provided in Appendix~\ref{appendix:pre-training}, \ref{appendix:experimental-details}~and~\ref{appendix:additional-experiments}.

\subsection{Synergistic Benefits of $\HyF$}\label{subsection:experiment-set-1}

\begin{table}[!h]
\vspace{-4ex}
\caption{Conditional generative performance on GuacaMol dataset. Best model is marked 
\textbf{bold}.}
\vspace{-0.55ex}
\begin{center}
\begin{scriptsize}
\begin{sc}
\begin{tabular}{l@{}clcccc}
\toprule
Model & Joint & Metric & QED & SA & logP & Avg. \\
\midrule
\multirow{3}{*}{MolGPT} & \multirow{3}{*}{\ding{55}} 
& MAD $\downarrow$ & 0.087 & 0.019 & 0.276 & 0.127 \\
& & SD $\downarrow$ & 0.074 & 0.017 & 0.262 & 0.118 \\
& & Validity $\uparrow$ & 0.985 & 0.986 & 0.982 & 0.984 \\
\midrule
\multirow{3}{*}{GraphGPT} & \multirow{3}{*}{\ding{55}} 
& MAD $\downarrow$ & 0.039 & 0.011 & 0.158 & 0.069 \\
& & SD $\downarrow$ & 0.082 & 0.047 & 0.653 & 0.261 \\
& & Validity $\uparrow$ & \textbf{0.998} & \textbf{0.997} & \textbf{0.992} & \textbf{0.995} \\
\midrule
\multirow{6}{*}{Hyformer} & \multirow{3}{*}{\ding{55}} 
& MAD $\downarrow$ & \hha 0.031 (0.003) & \hha 0.015 (0.001) & \hha 0.131 (0.010) & \hha 0.059 (0.004) \\
& & SD $\downarrow$ &\hha 0.045 (0.004) & \hha 0.020 (0.001) & \hha 0.170 (0.014) & \hha 0.078 (0.006) \\
& & Validity $\uparrow$ &\hha 0.993 (0.003) & \hha 0.990 (0.004) & \hha 0.985 (0.014) & \hha 0.989 (0.007) \\
\cmidrule(lr){2-7}
& \multirow{3}{*}{\ding{51}} 
& MAD $\downarrow$ &\hh \textbf{0.008 (0.001)} & \hh \textbf{0.005 (0.000)} & \hh \textbf{0.041 (0.002)} & \hh \textbf{0.018 (0.001)} \\
& & SD $\downarrow$ &\hh \textbf{0.015 (0.002)} & \hh \textbf{0.009 (0.002)} & \hh \textbf{0.051 (0.004)} & \hh \textbf{0.025 (0.003)} \\
& & Validity $\uparrow$ &\hh 0.990 (0.007) & \hh 0.985 (0.003) & \hh 0.987 (0.006) & \hh 0.987 (0.005) \\
\bottomrule
\end{tabular}
\end{sc}
\end{scriptsize}
\end{center}
\vspace{-5mm}
\end{table}

\subsubsection{Conditional Molecule Generation}\label{subsubsection:experiment-conditional-molecule-generation}

To demonstrate the synergistic benefits of $\HyF$ in generating molecules with specific molecular properties, we follow the setup of~\citet{bagal2022molgpt} and jointly pre-train $\HyF$ scaled to 8.5M parameters on GuacaMol dataset with 1.3M molecules, using pre-computed molecular descriptors~\citep{yang2019analyzing}. We subsequently jointly fine-tune $\HyF$ on GuacaMol dataset with QED, SA, and LogP molecular properties, as fine-tuning labels, and generate molecules with specific properties using $\HyF$’s conditional sampling procedure. Pre-training and experimental details alongside results for all property settings can be found in Appendix~\ref{appendix:pre-training} and \ref{appendix:conditional-molecule-generation-task}.

Following~\citep{gao2024graph}, we compare $\HyF$ to MolGPT~\citep{bagal2022molgpt} and GraphGPT~\citep{gao2024graph} using: mean absolute deviation (MAD) from the target property value, standard deviation (SD) of the generated property values and validity of the generated molecules. Evaluation is averaged across three target values per each property: QED:\{0.5, 0.7, 0.9\}, SA:\{0.7, 0.8, 0.9\}, and logP:\{0.0, 2.0, 4.0\}. Additionally, we compare to a non-joint variant of $\HyF$, in which the predictive head is fine-tuned with prediction loss, on top of a frozen, pre-trained generative part, i.e., without joint fine-tuning.

The jointly fine-tuned $\HyF$ achieves the lowest MAD and SD across all properties, while maintaining high validity, outperforming all baselines. Notably, $\HyF$ improves controllability over it's non-joint counterpart, confirming that joint fine-tuning enhances conditional generation. Although GraphGPT attains slightly higher validity, it does so at the cost of reduced controllability. These results demonstrate that joint modeling enables robust property-conditioned molecular generation across a range of chemically relevant~targets.

\subsubsection{Out-of-Distribution Molecular Property Prediction%
}\label{subsubsection:experiments-hit-identification}

To evaluate the ability of $\HyF$ to predict molecular properties in an out-of-distribution (OOD) setting,~we jointly pre-train $\HyF$ scaled to 50M parameters on 19M molecules from \citep{zhou2023unimol}, together with pre-computed molecular descriptors~\citep{yang2019analyzing}, and benchmark on the Hit Identification (Hi) task from the Lo-Hi benchmark \citep{steshin2023lohipracticalmldrug}. The Hi task requires generalization to molecular scaffolds not seen during training, with the test set constructed such that no molecule has a Tanimoto similarity greater than 0.4 (based on ECFP4 fingerprints) to any molecule in the training set. This setup mimics realistic drug discovery scenarios, where generalization beyond known chemical space is essential. For experimental details,~see Appendix~\ref{appendix:pre-training}~and~\ref{appendix:ood-property-prediction-task}. 

We follow the setup of \citep{steshin2023lohipracticalmldrug} and compare jointly fine-tuned $\HyF$ to all models reported in \citep{steshin2023lohipracticalmldrug}; machine learning models: k-NN, gradient boosting (GB), SVM and MLP, trained on molecular fingerprints (ECFP4, MACCS) and deep learning models: Chemformer \citep{yang2019analyzing} and Graphformer \citep{ying2021transformers, shi2022benchmarking}. Moreover, we compare to $\HyF$ (no-joint), which is a version of our model pre-trained using MLM loss, hence without alternating attention, and fine-tuned using the prediction loss only.

$\HyF$ achieves the highest mean AUPRC across all datasets (Table~\ref{table:hi-task}), outperforming fingerprint-based baselines and demonstrating the potential of deep learning methods in real-world drug discovery. The consistent ranking in favor of $\HyF$ shows the benefits of joint modeling in out-of-distribution molecular property prediction, although the differences are not statistically significant at the 95\% confidence level.

\begin{table}[t]
\vspace{-3ex}
\caption{Predictive performance (AUPRC) on Hit Identification (Hi) task from Lo-Hi benchmark. 
Mean and standard deviation across 3 random seeds.}
\label{table:hi-task}
\begin{center}
\begin{scriptsize}    
\begin{sc}
\begin{tabular}{lcccc}
\toprule
& \multicolumn{4}{l}{Dataset, AUPRC ($\uparrow$)}\\
\cmidrule(lr){2-5}
Model & DRD2-Hi & HIV-Hi & KDR-Hi & Sol-Hi \\
\midrule
Dummy baseline & 0.677 (0.061) & 0.040 (0.014) & 0.609 (0.081) & 0.215 (0.008) \\
KNN (ECFP4) & 0.706 (0.047) & 0.067 (0.029) & 0.646 (0.048) & 0.426 (0.022) \\
KNN (MACCS) & 0.702 (0.042) & 0.072 (0.036) & 0.610 (0.072) & 0.422 (0.009) \\
GB (ECFP4) & 0.736 (0.050) & 0.080 (0.038) & 0.607 (0.067) & 0.429 (0.006) \\
GB (MACCS) & 0.751 (0.063) & 0.058 (0.030) & 0.603 (0.074) & 0.502 (0.045) \\
SVM (ECFP4) & 0.677 (0.061) & 0.040 (0.014) & 0.611 (0.081) & 0.298 (0.047) \\
SVM (MACCS) & 0.713 (0.050) & 0.042 (0.015) & 0.605 (0.082) & 0.308 (0.021) \\
MLP (ECFP4) & 0.717 (0.063) & 0.049 (0.019) & 0.626 (0.047) & 0.403 (0.017) \\
MLP (MACCS) & 0.696 (0.048) & 0.052 (0.018) & 0.613 (0.077) & 0.462 (0.048) \\
\midrule
Chemprop & 0.782 (0.062) & 0.148 (0.114) & 0.676 (0.026) & 0.618 (0.030) \\
Graphormer & 0.729 (0.039) & 0.096 (0.070) & - & - \\
\hha \textsc{Hyformer} (no-joint) & \hha 0.778 (0.070) & \hha 0.154 (0.108) & \hha 0.675 (0.046) & \hha 0.601 (0.040) \\
\hh \textsc{Hyformer} & \hh 0.784 (0.082) & \hh 0.158 (0.128) & \hh 0.701 (0.022) & \hh 0.640 (0.036) \\
\bottomrule
\end{tabular}
\end{sc}
\end{scriptsize}
\end{center}

        \vspace{-3ex}
\end{table}

\subsubsection{Molecular Representation Learning}\label{subsubsection:experiment-probing}
\begin{table}[h]
\caption{Molecular representation learning performance of predictive, generative and joint models on MoleculeNet benchmark, evaluated using linear and KNN probing. Best model within each probing method is marked \textbf{bold}.}
\label{table:molecular-representation-learning-task}
\begin{center}
\begin{scriptsize}
\begin{sc}
\resizebox{\linewidth}{!}{%
\begin{tabular}{cl@{}lcccccccccc}
\toprule
& & & \multicolumn{3}{l}{Dataset, RMSE $\downarrow$} & \multicolumn{7}{l}{Dataset, AUCROC $\uparrow$}  \\
\cmidrule(lr){4-6} \cmidrule(lr){7-13}
  & Type \, &        Model &                       Esol &                   Freesolv &                       Lipo &                      BBBP &                      BACE &                   ClinTox &                     Tox21 &                   ToxCast &                     SIDER &                       HIV \\
\midrule
\parbox[t]{2mm}{\multirow{7}{*}{\rotatebox[origin=c]{90}{Linear}}}
& P. & Uni-Mol & 1.350 & \textbf{2.503} & 1.002 & 65.5 & 66.3 & 74.3 & 70.1 & 59.9 & 58.1 & 73.6 \\
& P. & \hha $\HyF$ (no-joint) & \hha 1.256 & \hha 2.640 & \hha 0.894 & \hha 68.4 & \hha 73.6 & \hha 98.8 & \hha \textbf{73.4} & \hha \textbf{61.2} & \hha 58.8 & \hha 75.9 \\
& G. & MolGPT & 1.299 & 4.110 & 1.033 & 66.8 & 79.1 & 97.8 & 71.9 & 60.5 & 59.2 & \textbf{77.5} \\
& J. & MoLeR & \textbf{1.223} & 4.935 & 0.938 & 67.8 & \textbf{79.5} & 84.6 & 71.1 & 59.3 & 58.3 & 74.6 \\
& J. & RT & 2.510 & 4.515 & 1.158 & 54.7 & 63.1 & 57.3 & 50.5 & 52.8 & 54.5 & 65.6 \\
& J. & Graph2Seq & 1.498 & 3.486 & 0.890 & 66.0 & 76.7 & 72.0 & 71.2 & 60.4 & 50.5 & 57.1 \\
& J. & \hh $\HyF$ & \hh 1.527 & \hh 4.294 & \hh \textbf{0.887} & \hh \textbf{68.5} & \hh 77.2 & \hh \textbf{99.5} & \hh 72.4 & \hh 60.7 & \hh \textbf{60.8} & \hh 74.7 \\
\midrule
\midrule
\parbox[t]{2mm}{\multirow{7}{*}{\rotatebox[origin=c]{90}{KNN}}}
& P. & Uni-Mol & 1.579 & 3.403 & 1.025 & 60.0 & 75.9 & 78.0 & 64.7 & 57.5 & 61.0 & 64.3 \\
& P. & \hha $\HyF$ (no-joint) & \hha 1.380 & \hha 3.254 & \hha 0.978 & \hha 67.8 & \hha 75.4 & \hha 89.0 & \hha 66.3 & \hha 57.6 & \hha 58.1 & \hha 71.4 \\
& G. & MolGPT & \textbf{1.232} & \textbf{3.075} & 0.987 & 68.4 & 71.9 & \textbf{94.2} & 66.0 & 56.9 & 61.0 & 70.5 \\
& J. & MoLeR & 1.802 & 4.061 & 1.096 & 59.4 & 72.0 & 71.2 & 64.9 & 53.3 & 57.3 & 67.3 \\
& J. & RT & 2.411 & 4.734 & 1.242 & 59.3 & 56.1 & 59.4 & 50.8 & 52.2 & 51.2 & 54.1 \\
& J. & Graph2Seq & 1.361 & 3.796 & 0.967 & \textbf{71.0} & \textbf{80.6} & 56.3 & 67.7 & 57.8 & 49.9 & 52.4 \\
& J. & \hh $\HyF$ & \hh 1.260 & \hh 3.999 & \hh \textbf{0.902} & \hh 69.5 & \hh 78.4 & \hh 93.8 & \hh \textbf{71.2} & \hh \textbf{59.3} & \hh \textbf{64.1} & \hh \textbf{71.8} \\

\bottomrule
\end{tabular}
}
\end{sc}
\end{scriptsize}
\end{center}
\end{table}

To assess the quality of molecular representations learned by $\HyF$, we introduce a novel probing protocol that emulates a typical drug discovery setting, where fixed molecular embeddings are used as inputs to downstream predictive models. In this setup, we train simple linear models with L2 regularization, and k-nearest neighbor (KNN) predictors on the top of frozen embeddings extracted from the respective pre-trained models. To ensure comparability with MoleculeNet benchmark (Section~\ref{section:molecular-property-prediction}), we reuse the same datasets, data splits, and model checkpoints. Implementation details are provided in Appendix~\ref{appendix:representation-learning-task}.

We compare representations extracted from jointly pre-trained $\HyF$ to those extracted from a range of baselines, including state-of-the-art generative (MolGPT~\citep{bagal2022molgpt}), predictive (Uni-Mol~\citep{zhou2023unimol}), and joint models: MoLeR~\citep{maziarz2022learning}, Regression Transformer (RT) \citep{born2023regression} and Graph2Seq \citep{gao2024graph}. Moreover, to quantify the effect of alternating attention and joint pre-training, we compare to $\HyF$ (no-joint), the version of our model trained solely with MLM loss.  

The jointly pre-trained representations from $\HyF$ are the most predictive across both KNN and linear probings, achieving the best performance on 4 out of 10 datasets for linear, and 5 out of 10 datasets for KNN, outperforming all other baselines (Table~\ref{table:molecular-representation-learning-task}). An additional analysis of linear probing on ClinTox shows high per-target F1 scores of 0.98 and 0.90, indicating robust performance of \textsc{Hyformer} across targets. The next best models, Hyformer (no-joint) and MoLeR for linear and MolGPT for KNN probing, rank first on 2 and 3 out of 10 datasets, respectively. Notably, joint models outperform UniMol, the state-of-the-art property predictor, on all datasets, except for Freesolv with linear probing, highlighting the effectiveness of joint modeling for transferable molecular representation learning.

\subsection{Generative and predictive performance of $\HyF$}\label{subsection:experiment-set-2}

We next confirm that $\HyF$ effectively addresses the challenges of joint training, while it enjoys the synergistic benefits described above, it does not sacrifice generative or predictive performance compared to state-of-the-art models trained separately for these tasks.

\subsubsection{Unconditional Molecule Generation}\label{section:unconditional-molecule-generation}

\begin{table}[!t]
\vspace{-3.5ex}
\caption{Unconditional generative performance on GuacaMol distribution learning benchmarks.
The best model in each category is marked \textbf{bold}.}
\label{table:guacamol-distribution-learning-benchmark}
\begin{center}
\begin{scriptsize}    
\begin{sc}
\begin{tabular}{@{}lccccc@{}}
    \toprule
    Model & FCD Score $\uparrow$ & KL div.\ Score $\uparrow$ & Val.\ $\uparrow$ & Uniq.\ $\uparrow$ & Nov.\ $\uparrow$ \\
    \midrule
    \textit{Graph-based} \vspace{0.3ex} \\ 
    JT-VAE & 0.750 & 0.940 & \textbf{1.000} & - & - \\
    MoLeR & 0.625 & 0.964 & \textbf{1.000} & \textbf{1.000} & \textbf{0.991} \\
    MAGNet & \textbf{0.760} & 0.950 & \textbf{1.000} & - & - \\
    MiCaM & 0.731 & \textbf{0.989} & \textbf{1.000} & 0.994 & 0.986 \\
    \midrule
    \textit{SMILES-based} \vspace{0.3ex} \\
    VAE & 0.863 & 0.982 & 0.870 & 0.999 & 0.974 \\
    LSTM & 0.913 & 0.991 & 0.959 & \textbf{1.000} & 0.912 \\
    MolGPT & 0.907 & 0.992 & 0.981 & 0.998 & \textbf{1.000} \\
    \hh $\textsc{Hyformer}_{\tau=0.9}$  & \hh 0.897 (0.002) & \hh \textbf{0.995 (0.000)} & \hh \textbf{0.986 (0.001)} & \hh 0.999 (0.000) & \hh 0.879 (0.006) \\
    \hh $\textsc{Hyformer}_{\tau=1.0}$ & \hh \textbf{0.918 (0.002)} & \hh 0.989 (0.001) & \hh 0.978 (0.000) & \hh 0.999 (0.000) & \hh 0.908 (0.002) \\
    \hh $\textsc{Hyformer}_{\tau=1.1}$ & \hh 0.894 (0.002) & \hh 0.977 (0.001) & \hh 0.965 (0.001) & \hh \textbf{1.000 (0.000)} & \hh 0.931 (0.001) \\
    \bottomrule
\end{tabular}
\end{sc}
\end{scriptsize}
\end{center}
\vspace{-2ex}
\end{table}

To evaluate the unconditional generative performance of $\HyF$, we perform an evaluation on the Guacamol distribution learning benchmark~\citep{brown2019guacamol}. We use $\HyF$ scaled to 8.5M parameters and trained on GuacaMol dataset with $1.3$M molecules, together with pre-computed molecular descriptors~\citep{yang2019analyzing}, and investigate the impact of sampling temperature $\tau$. For experimental details, see Appendix~\ref{appendix:molecule-generation-task}. 

We compare to state-of-the-art unconditional generative models; SMILES-based: VAE \citep{kingma2013auto}, LSTM \citep{gers2001lstm}, MolGPT \citep{bagal2022molgpt} and graph-based: JT-VAE \citep{jin2019junction}, MoLeR \citep{maziarz2022learning}, MAGNet \citep{hetzel2023magnet}, MiCaM \citep{geng2023novo}. We omit RT \citep{born2023regression} and GraphGPT \citep{gao2024graph} as they do not generate molecules unconditionally or provide results on the GuacaMol benchmark. 

$\HyF$, with top FCD and KL div.\ score values, outperforms graph-based models, while achieving the highest validity among SMILES-based models. Across various sampling temperatures $\tau$, $\HyF$ consistently lies on the Pareto front, balancing distributional fidelity (FCD Score, KL div.\ Score), validity and uniqueness. Overall, SMILES-based models outperform those based on theoretically more informative graph representations in terms of FCD Score, at the expense of not always sampling~valid~molecules.

\subsubsection{Molecular Property Prediction}\label{section:molecular-property-prediction}

\begin{table}[!t]
\caption{Predictive performance of predictive and joint models on the MoleculeNet benchmark. Mean and standard\ deviation\ across 3 random seeds. The best model in each category, statistically significant at the 95\% confidence level, is marked~\textbf{bold}.}
\label{table:molecular-property-prediction-task}
\vspace{-2ex}
\begin{center}
\begin{scriptsize}
\begin{sc}
\resizebox{\linewidth}{!}{%
\begin{tabular}{clcccccccccc}
\toprule
& & \multicolumn{3}{l}{Dataset, RMSE $\downarrow$} & \multicolumn{7}{l}{Dataset, AUCROC $\uparrow$} \\
\cmidrule(lr){3-5} \cmidrule(lr){6-12}
& Model & Esol & Freesolv & Lipo & BBBP & BACE & ClinTox & Tox21 & ToxCast & SIDER & HIV \\ 
\midrule
\parbox[t]{2mm}{\multirow{11}{*}{\tiny\rotatebox[origin=c]{90}{Predictive}}} & D-MPNN & 1.050(0.008) & 2.082(0.082) & 0.683(0.016) & 71.0(0.3) &80.9(0.6)& 90.6(0.6)& 75.9(0.7)& 65.5(0.3)& 57.0(0.7)& 77.1(0.5) \\
& Attentive FP & 0.877(0.029) & 2.073(0.183) & 0.721(0.001) & 64.3(1.8)& 78.4(0.02)& 84.7(0.3)& 76.1(0.5)& 63.7(0.2)& 60.6(3.2)& 75.7(1.4) \\
& N-GramRF & 1.074(0.107) & 2.688(0.085) & 0.812(0.028) &69.7(0.6) &77.9(1.5)& 77.5(4.0)& 74.3(0.4)& - &66.8(0.7)& 77.2(0.1) \\
& N-GramXGB & 1.083(0.082) & 5.061(0.744) & 2.072(0.030) &69.1(0.8) &79.1(1.3) &87.5(2.7) &75.8(0.9)& - &65.5(0.7) &78.7(0.4) \\
& PretrainGNN & 1.100(0.006) & 2.764(0.002) & 0.739(0.003) &68.7(1.3)& 84.5(0.7)& 72.6(1.5) &78.1(0.6)& 65.7(0.6)& 62.7(0.8)& 79.9(0.7) \\
& GROVERbase & 0.983(0.090) & 2.176(0.052) & 0.817(0.008) &70.0(0.1)& 82.6(0.7)& 81.2(3.0)& 74.3(0.1)& 65.4(0.4)& 64.8(0.6)& 62.5(0.9) \\
& GROVERlarge & 0.895(0.017) & 2.272(0.051) & 0.823(0.010) &69.5(0.1)& 81.0(1.4) &76.2(3.7) &73.5(0.1) &65.3(0.5) &65.4(0.1) &68.2(1.1) \\
& GraphMVP & 1.029(0.033) & - & 0.681(0.010) & 72.4(1.6) &81.2(0.9) &79.1(2.8) &75.9(0.5) &63.1(0.4) &63.9(1.2)& 77.0(1.2) \\
& MolCLR & 1.271(0.040) & 2.594(0.249) & 0.691(0.004) &72.2(2.1) &82.4(0.9)& 91.2(3.5)& 75.0(0.2)& -& 58.9(1.4)& 78.1(0.5) \\
& Mole-BERT & 1.015 (0.030) & - & 0.676 (0.017)&  71.9 (1.6) & 80.8 (1.4) &  78.9 (3.0) & 76.8 (0.5) & 64.3 (0.2) & - & - \\
& GEM & 0.798(0.029) & 1.877(0.094) & 0.660(0.008) &72.4(0.4)& 85.6(1.1) &90.1(1.3) &78.1(0.1) &69.2(0.4) &67.2(0.4) &80.6(0.9) \\
& Uni-Mol & 0.788(0.029) & \textbf{1.480(0.048)} & \textbf{0.603(0.010)} & 72.9(0.6) & 85.7(0.2) & 91.9(1.8) & \textbf{79.6(0.5)} & 69.6(0.1) & 65.9(1.3) & 80.8(0.3) \\
\midrule
\parbox[t]{2mm}{\multirow{2}{*}{\tiny\rotatebox[origin=c]{90}{Joint}}} & Graph2Seq & 0.860(0.024) & 1.797(0.237) & 0.716(0.019) & 72.8(1.5) & 83.4(1.0) & - & 76.9(0.3) & 65.4(0.5) & 68.2(0.9) & 79.4(3.9) \\
& \hh \textsc{Hyformer} & \hh \textbf{0.774(0.026)} & \hh 2.047(0.076) & \hh \textbf{0.643(0.002)} & \hh 75.9(0.9) & \hh 83.8(1.1) & \hh \textbf{99.2(0.5)} & \hh \textbf{79.2(0.1)} & \hh 65.5(0.6) & \hh 65.7(1.6) & \hh 80.0(1.0) \\
\bottomrule
\end{tabular}
}
\end{sc}
\end{scriptsize}
\end{center}
\end{table}

To evaluate the predictive performance of $\HyF$, we use $\HyF$ scaled to 50M parameters on 19M molecules from \citep{zhou2023unimol}, together with pre-computed molecular descriptors~\citep{yang2019analyzing}, and fine-tune end-to-end on MoleculeNet benchmark \citep{wu2018moleculenet}. For experimental details, see Appendix~\ref{appendix:property-prediction-task}.

We follow the experimental protocol of \citep{zhou2023unimol}, use scaffold splitting and compare to predictive models: D-MPNN \citep{yang2019analyzing}, AttentiveFP \citep{xiong2019pushing}, N-gram \citep{liu2019n} with Random Forest and XGBoost \citep{chen2016xgboost},
PretrainGNN \citep{hu2019strategies}, GROVER \citep{rong2020self}, MolCLR \citep{wang2022molecular}, Mole-BERT \citep{xia2023molebert}, GraphMVP \citep{liu2021pre}, GEM \citep{fang2022geometry}, UniMol \citep{zhou2023unimol} and a joint model: Graph2Seq \citep{gao2024graph}. We omit RT \citep{born2023regression} and other models that use random splitting.

$\HyF$ obtains the lowest RMSE on Esol, and highest AUROC on BBBP and ClinTox, outperforming all models on 3 out of 10 datasets  (Table~\ref{table:molecular-property-prediction-task}). Moreover,~$\HyF$ performs better than Graph2Seq, the only other joint model capable of simultaneous molecule generation and property prediction, on 8 out of 10 datasets. Altogether, $\HyF$ outperforms the other joint learning model, Graph2Seq, and successfully rivals the performance of purely predictive models, demonstrating the efficiency of our joint learning strategy. 

\vspace{-2ex}
\begin{table}[h]
\caption{Conditional generative performance on antimicrobial peptide design. Mean and standard deviation computed over 100 bootstrap iterations. The best model is marked \textbf{bold}.}
\begin{center}
\begin{scriptsize}
\begin{sc}
\begin{tabular}{lccccccc}
    \toprule
    Model & Perplexity\footnotemark & Diversity $\uparrow$ & Fitness $\uparrow$ & HydrAMP$_{\textsc{MIC}}$ $\uparrow$ & AMPlify $\uparrow$ & amPEPpy $\uparrow$ \\
\midrule
PepCVAE
& 20.11 (0.14)
& \textbf{0.87} (0.0003)
& 0.07 (0.0004)
& 0.20 (0.0016)
& 0.49 (0.0016)
& 0.52 (0.0007) \\
AMPGAN
& 18.58 (0.10)
& 0.81 (0.0005)
& 0.12 (0.0005)
& 0.32 (0.0019)
& 0.64 (0.0018)
& 0.54 (0.0008) \\
HydrAMP
& 20.14 (0.12)
& \textbf{0.86} (0.0004)
& 0.09 (0.0003)
& 0.49 (0.0021)
& 0.59 (0.0016)
& 0.52 (0.0006) \\
AMP-Diffusion
& 16.93 (0.18)
& 0.82 (0.0004)
& 0.13 (0.0005)
& 0.26 (0.0018)
& 0.20 (0.0014)
& 0.38 (0.0006) \\
\hh \textsc{Hyformer}
& \hh 17.98 (0.06)
& \hh 0.80 (0.0005)
& \hh \textbf{0.19} (0.0006)
& \hh \textbf{0.80} (0.0019)
& \hh \textbf{0.94} (0.0027)
& \hh \textbf{0.72} (0.0018) \\

    \bottomrule
\end{tabular}
\end{sc} 
\end{scriptsize}
\end{center}
\vspace{-3ex}
\label{table:antimicrobial-peptide-design}
\end{table}
\footnotetext{We report perplexity, but do not seek to minimize it, as it inherently balances plausibility and novelty.}

\subsection{Antimicrobial Peptide Design}\label{section:experiments:antimicrobial-peptide-design}

To show the benefits of joint learning in a real-world use case related to drug discovery, we apply $\HyF$ to the task of antimicrobial peptide (AMP) design \citep{chen2023ampdiffusion}, i.e., generating AMPs with low minimal inhibitory concentration values (MIC) against \emph{E.~coli} bacteria. We jointly pre-train $\HyF$ on 3.5M general-purpose peptide sequences, and subsequently on 1M AMP sequences, together with 39 physicochemical descriptors from \emph{peptidy} package \citep{ozccelik2025peptidy}. Next, we jointly fine-tune $\HyF$ on 4,547 peptides with their MIC values~\citep{Szymczak2022.01.27.478054} and conditionally sample 50K peptides with an MIC regressor threshold set to $\leq 10^{0.3} \approx \SI{2}{\micro\molar}$. For experimental details, see Appendix~\ref{appendix:antimicrobial-peptide-design}.

We compare $\HyF$ AMP generation baselines: PepCVAE~\citep{das2018pepcvaesemisupervisedtargeteddesign}, AMPGAN~\citep{AMPGAN}, HydrAMP~\citep{Szymczak2022.01.27.478054}, and AMP-Diffusion~\citep{chen2023ampdiffusion}. Evaluation is based on four criteria: Perplexity~\citep{Torres2025.01.31.636003}, Diversity and Fitness~\citep{Li2024-pa}, and success rates in generating AMPs and low-MIC candidates. For the latter, we use HydrAMP$_{\textsc{MIC}}$, Amplify~\citep{Li2022-us}, and amPEPpy~\citep{10.1093/bioinformatics/btaa917} classifiers as state-of-the-art \emph{in-silico} oracles. 

$\HyF$ outperforms all baseline models by a large margin in terms of  generating peptides with a high fitness and AMP probability, as evaluated by all oracle classifiers (Table~\ref{table:antimicrobial-peptide-design}). Despite the stringent conditioning MIC threshold of $\SI{2}{\micro\molar}$, $\HyF$ maintains competitive perplexity and high diversity. These results suggest that even when constrained to explore less charted regions of sequence space, $\HyF$ is able to generate biologically plausible and novel peptide candidates.

To further validate the biological relevance of the generated peptides, we show that both unconditional sampling from jointly pre-trained $\HyF$, and conditional sampling from the fine-tuned model produces amino-acid distributions in close agreement with the training data (Figure~\ref{fig:hyformer_AMP}a).  Despite this very close agreement, the conditionally sampled peptides obtain a significant improvement of charge, aromaticity, and isoelectric point over the known non-AMPs, as compared to known AMPs (Fig.~\ref{fig:hyformer_AMP}b). Finally, to gain insight into which amino acids contribute most to antimicrobial activity, we analyze the attention weights of $\HyF$  (Fig.~\ref{fig:hyformer_AMP}c). The attention mechanism frequently prioritizes highly charged Arginine (R) and Lysine (K), which is expected as high AMP activity is associated with increased charge. The high attention frequency on Tryptophan (W) agrees with previous reports about this amino-acid's unique  ability to interact with the interface of the bacterial membrane~\citep{bi2014antimicrobial}. Finally, the high attention that $\HyF$ puts on Proline (P) agrees with the known high potency of Proline-rich AMPs, which kill bacteria  via a specific, non-lytic mechanism~\citep{lai2019identification}.

\section{Discussion}

In this paper, we introduced $\HyF$, a transformer-based joint model that combines an autoregressive decoder and a bidirectional encoder within a single set of shared parameters, using an alternating attention mechanism and joint pre-training. We showed that $\HyF$ provides synergistic benefits in conditional sampling, representation learning and out-of-distribution property prediction, with ablations highlighting the specific contributions of alternating attention and joint training. Furthermore, we validated the utility of joint modeling in a real-world antimicrobial peptide design task. Our results indicate that $\HyF$ successfully unifies molecular generation and property prediction for SMILES-based molecular representations, opening the avenue for the integration into real-world drug discovery pipelines, where informative molecular representations, robustness to OOD examples and robust conditional sampling are crucial. 

\paragraph{Limitations \& Future Work} 
However, joint modeling introduces an inherent trade-off. While shared parameters promote synergistic benefits and learning unified representations, they may limit task-specific specialization. Therefore, a promising direction for future work is designing dynamic or modular attention architectures that allocate capacity across tasks more flexibly, while preserving synergistic benefits. Moreover, to ensure fair comparison with prior work and isolate the effect of joint learning, we deliberately restricted model scale and relied on a fixed set of analytically computed descriptors. The extent to which the observed synergistic benefits carry over to other modalities, such as 3D structures, morphology or transcriptomics, remains an open question. 

\begin{figure*}[!t]
  \centering
  \includegraphics[width=\textwidth]{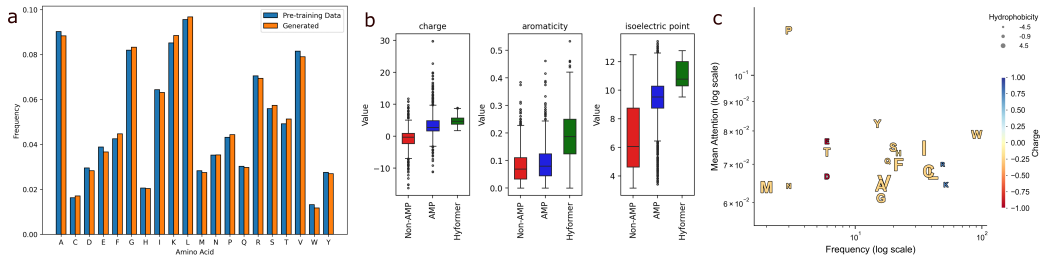}
  \vskip -0.2cm
  \caption{
    \textbf{(a)} Amino-acid distributions between the pre-training data and unconditionally generated~sequences. \textbf{(b)} Distributions of charge, aromaticity, and isoelectric point (pI) for: non-AMP, AMP and conditionally generated sequences. \textbf{(c)} Frequency of crossing an attention threshold (x-axis) vs.\ mean attention weight (y-axis) for distinct amino-acids, colored by charge and sized by hydrophobicity.
    }
  \label{fig:hyformer_AMP}
\end{figure*} 
\vspace{-2ex}

\newpage
\section*{Acknowledgments}
\begin{wrapfigure}{r}{0.33\linewidth}
    \centering
    \vspace{-5ex}
    \includegraphics[width=\linewidth]{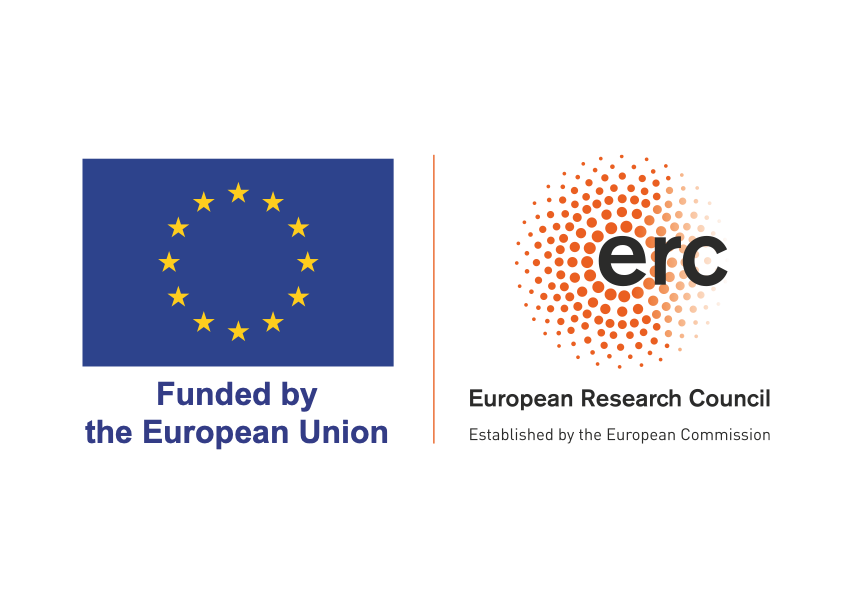}
    \vspace{-10ex}
\end{wrapfigure}

This project has received funding from the European Research Council (ERC) under the European Funding Union’s Horizon 2020 research and innovation programme (grant agreement No 810115 – DOG-AMP).

We thank Hassan Akell for insightful discussions and careful review of the theoretical part of this paper. Their feedback substantially improved the clarity, rigor, and presentation of the theoretical analysis.

\paragraph{Conflict of interest} Projects at Ewa Szczurek Lab at the University of Warsaw are co-funded by Merck Healthcare GmbH.

\bibliography{main}
\bibliographystyle{tmlr}

\appendix
\newpage

\section{Impact Statement}\label{appendix:impact}
The goal of this this work is 
to improve the field of deep generative modeling and, potentially, drug design. 
An example of potential malicious use of our approach would be training a deep generative model for generating new toxic molecules.
However, the intention of this paper is to provide tools that will facilitate designing new potential medications.

\section{Extended Discussion}

\paragraph{Extended Novelty Statement}
The alternating self-attention scheme is closely related to prior multitask transformer work (e.g., \citet{dong2019unified}). However, to the best of our knowledge, \textsc{Hyformer} is the first model to employ an alternating attention scheme during both pre-training and fine-tuning, resulting in a joint model that unifies molecular generation and property prediction. In contrast, \citet{dong2019unified} apply alternating attention only during pre-training. Moreover, \textsc{Hyformer} explicitly combines reconstruction-based losses: LM and MLM, with a prediction loss (Eq.~10), while \citet{dong2019unified} rely exclusively on reconstruction-based losses, without incorporating any supervision based on labeled data. Together, these architectural and objective-level differences enable joint generative and predictive modeling, and distinguish \textsc{Hyformer} from prior alternating-attention-based models.

\paragraph{Extended Future Work} An interesting direction for future work is to cast molecular property prediction as a purely generative task. While such a formulation could further unify generation and prediction within a single modeling paradigm, it introduces nontrivial challenges, most notably the principled tokenization and representation of continuous molecular properties. Moreover, the proposed framework naturally extends to modeling molecular interactions. Since \textsc{Hyformer} natively supports multimodal inputs, small-molecule–protein interactions can be incorporated by conditioning the decoder on protein embeddings and augmenting training with additional pretraining objectives. Exploring such extensions to jointly model multiple molecular modalities represents a promising avenue for future research.

\section{Notation}\label{appendix:notation}

\begin{table}[!h]
    \begin{tabular}{cl}
        \textbf{Symbol} & \textbf{Meaning} \\ 
        $[N]$  & Set of integers $1, \ldots, N$ \\
        $\mathbf{A}$    & Matrix \\
        $\mathbf{A}^T$    & Transposed matrix $\mathbf{A}$ \\
        $\mathbf{A}_{i}$, $\mathbf{A}_{ij}$, $\mathbf{A}^{ij}$    & Matrix indexed for some purpose \\
        $(\mathbf{A})_{i}, \mathbf{A}[i], A_{i} $    & The $i$-th row of matrix $\mathbf{A}$ \\
        $(\mathbf{A})_{ij}, \mathbf{A}[i, j], A_{ij} $    & The $i$-th, $j$-th entry of matrix $\mathbf{A}$ \\
        $\mathbf{a}$    & Vector (column-vector) \\
        $\mathbf{a}_{i}$, $\mathbf{a}_{ij}$, $\mathbf{a}^{ij}$    & Vector indexed for some purpose \\
        $(\mathbf{a})_{i}, \mathbf{a}[i], a_{i} $ & The $i$-th entry of vector $\mathbf{a}$ \\
        $a$ & Scalar \\ 
        $\X$   & input space, i.e.\ the space of all possible inputs, data examples \\
        $\Y$   & target space i.e.\ the space of all possible property values \\
        $p(\x, y)$ & joint data distribution \\
        $p_\theta(\x, y)$ & joint model parametrized by parameters $\theta \in \Theta$ \\
        $p_\theta(y \mid \x)$ & predictive model parametrized by parameters $\theta \in \Theta$ \\
        $p_\theta(\x)$ & generative model parametrized by parameters $\theta \in \Theta$ \\
    \end{tabular}
    \label{tab:my_label}
\end{table}

\section{Proofs}\label{appendix:proofs}

\subsection{Gradient Interference}\label{appendix:proofs-gradient-interference}

\begin{lemma}
\label{lem:softmax-matrix-gradient}
Let $\mathbf{x}\in\mathbb{R}^{I}$ and define
\begin{equation*}
    a_i =
\softmax(\mathbf{x})_i
=
\frac{\exp{x_i}}{\sum_{k=1}^{I}\exp{ x_k}} \text{ , for } i=1, \dots, I.
\end{equation*}
The Jacobian of the softmax is given by
\[
\frac{\partial a_i}{\partial x_j}
   \;=\;
   a_i\bigl(\,\delta_{ij}-a_j\bigr),
   \qquad i,j=1,\dots,I,
\]
where $\delta_{ij}$ is the Kronecker delta, i.e., $\delta_{ij}=1$ if $i=j$ and $0$ otherwise.
\end{lemma}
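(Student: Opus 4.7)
The plan is to compute the partial derivative directly from the definition, splitting into the cases $i=j$ and $i\ne j$ and then unifying them with the Kronecker delta. Let me write $S(\mathbf{x}) = \sum_{k=1}^{I}\exp(x_k)$, so that $a_i = \exp(x_i)/S(\mathbf{x})$. The key observation is that $\partial S/\partial x_j = \exp(x_j)$ for every $j$, since only the $j$-th summand depends on $x_j$.

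First I would apply the quotient rule to $a_i = \exp(x_i)/S$. When $i=j$, the numerator differentiates to $\exp(x_j)$, giving
\[
\frac{\partial a_j}{\partial x_j}
= \frac{\exp(x_j)\,S - \exp(x_j)\exp(x_j)}{S^{2}}
= \frac{\exp(x_j)}{S}\left(1 - \frac{\exp(x_j)}{S}\right)
= a_j(1-a_j).
\]
When $i\ne j$, the numerator $\exp(x_i)$ is independent of $x_j$, so the quotient rule yields
\[
\frac{\partial a_i}{\partial x_j}
= -\,\frac{\exp(x_i)\exp(x_j)}{S^{2}}
= -\,a_i\, a_j.
\]

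Finally, I would combine these two cases into the single formula $a_i(\delta_{ij}-a_j)$, observing that substituting $\delta_{ij}=1$ recovers the $i=j$ case and $\delta_{ij}=0$ recovers the $i\ne j$ case. Since the derivation is a routine application of the quotient rule and the definition of $\softmax$, there is no substantive obstacle; the only point requiring mild care is bookkeeping the dependence of the denominator $S$ on every coordinate $x_j$, which is what couples otherwise-independent entries and produces the $-a_i a_j$ off-diagonal terms.
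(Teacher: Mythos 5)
Your derivation is correct and matches the paper's proof, which likewise just differentiates the quotient $a_i=\exp(x_i)/\sum_k\exp(x_k)$ (the paper simply cites the quotient/chain rule and omits the case split you write out explicitly). No issues.
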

\begin{proof}
Differentiate the quotient
$
a_i = \exp x_i \bigl/ \sum_{k}\exp x_k
$
using the product and chain rules~\citep{petersen2008matrix}.
\end{proof}

\begin{corollary}
\label{cor:causal-softmax-jacobian}
Let $\Queries,\Keys\in\mathbb{R}^{T\times d}$ and the attention score matrix $\mathbf{S}_\rightarrow$ with a causal mask ${\Mask_\rightarrow}$ be defined as
\[
    \mathbf{S}_\rightarrow 
    =
    \frac{\Queries\Keys^{T}}{\sqrt{d}}+{\Mask}_{\rightarrow} \text{\;, where \;}
  ({\Mask}_{\rightarrow})_{ij}
     =
     \begin{cases}
       0        & \text{, if } i \geq j \\
       -\infty  & \text{, if } i < j.
     \end{cases}
\]
For a fixed row index $t \in [T]$, define the attention score row-vector 
$\mathbf{s}_t = (\mathbf{S})_t \in\mathbb{R}^{T}$ and the corresponding 
row-wise softmax output as  
$\mathbf{a}_{t}=\softmax(\mathbf{s}_t)\in\mathbb{R}^{T}$. The Jacobian of the softmax output $\mathbf{a}_{t}$ with respect to masked attention score $\mathbf{s}_t$ is
given by 
\[
\frac{\partial (\mathbf{a}_t)_{i}}{\partial (\mathbf{s}_t)_j} =
     (\mathbf{a}_{t})_{i} \bigl(\,\delta_{ij}-(\mathbf{a}_{t})_{j}\bigr).
\]
Hence, if $i <  t$ or $j < t$, while $i\neq j$, then $\frac{\partial (\mathbf{a}_t)_{i}}{\partial (\mathbf{s}_t)_j} = 0$.
\end{corollary}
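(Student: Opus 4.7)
The proof is a direct corollary of Lemma~\ref{lem:softmax-matrix-gradient}, and my plan has two short stages.

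First, I would apply Lemma~\ref{lem:softmax-matrix-gradient} with $I = T$ to the vector $\mathbf{s}_t \in \mathbb{R}^T$ to obtain
\[
\frac{\partial (\mathbf{a}_t)_i}{\partial (\mathbf{s}_t)_j} = (\mathbf{a}_t)_i\bigl(\delta_{ij} - (\mathbf{a}_t)_j\bigr),
\]
which is exactly the first assertion of the corollary. No additional work is required beyond recognizing that $\mathbf{a}_t = \softmax(\mathbf{s}_t)$ is literally the object the lemma addresses, so the softmax-Jacobian formula carries over verbatim to each row of $\mathbf{S}_\rightarrow$.

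Second, I would exploit the structure imposed by the causal mask to identify the vanishing Jacobian entries. The key observation is that for any index $k$ in the masked region of row $t$ (that is, any $k$ with $(\Mask_\rightarrow)_{tk} = -\infty$), the corresponding input satisfies $(\mathbf{s}_t)_k = -\infty$, so $(\mathbf{a}_t)_k = 0$. Substituting this into the Jacobian identity: if $j$ lies in the masked region and $i \neq j$, then $\delta_{ij} - (\mathbf{a}_t)_j = 0 - 0 = 0$, so the partial vanishes; if instead $i$ lies in the masked region, the prefactor $(\mathbf{a}_t)_i$ vanishes. Either way, the stated conclusion follows from a single substitution.

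There is no substantive obstacle; the argument is a direct plug-in. The only point deserving care is the formal interpretation of differentiating with respect to a coordinate of $\mathbf{s}_t$ that equals $-\infty$. I would sidestep this by viewing the masked softmax as a function of the finite pre-mask bilinear scores $(\Queries\Keys^T/\sqrt{d})_{tj}$, with the causal mask entering as an additive constant. Because adding a constant to an input of softmax does not alter its Jacobian with respect to that input, Lemma~\ref{lem:softmax-matrix-gradient} transfers cleanly, and the vanishing structure emerges automatically from the identically-zero softmax entries in the masked positions.
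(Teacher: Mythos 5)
Your proposal is correct and follows essentially the same route as the paper's own proof: invoke Lemma~\ref{lem:softmax-matrix-gradient} row-wise, then note that the masked entries of $\mathbf{s}_t$ force the corresponding softmax probabilities to zero, which kills the off-diagonal Jacobian entries. Your extra remark about interpreting the derivative at $-\infty$ via the mask acting as an additive constant on the finite pre-mask scores is a welcome refinement the paper omits, and it also makes visible that, under the mask convention stated in the corollary ($({\Mask}_{\rightarrow})_{tj}=-\infty$ for $j>t$), the vanishing entries are those with $i>t$ or $j>t$ rather than the $i<t$ or $j<t$ written in the corollary's final sentence.
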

\begin{proof}
Lemma~\ref{lem:softmax-matrix-gradient} gives the derivative of the softmax. As the causal mask sets $(\mathbf{s}_t)_j=-\infty$ for every
$j<t$, the corresponding probabilities satisfy $(\mathbf{a}_{t})_{j}=0$. 
\end{proof}

\section{Benchmark Task Definitions}

\subsection{Conditional Molecule Generation}

\textbf{Quantitative Estimate of Drug-likeness (QED).} A continuous metric of the drug-likeness of a molecule based on physicochemical properties such as molecular weight and hydrophobicity, with values ranging from 0 to 1. \citep{bickerton2012quantifying}

\textbf{Synthetic Accessibility (SA).} A continuous metric quantifying how difficult a molecule is to synthesize, derived from structural complexity, where lower values indicate easier synthesis. \citep{ertl2009estimation}

\textbf{Partition Coefficient (logP).} A continuous metric of molecular hydrophobicity, defined as the logarithm of the partition coefficient between octanol and water, where higher values denote greater affinity for lipophilic environments. \citep{wildman1999prediction}

Metric values calculated using rdkit 2023.09.2.

\subsection{Out-of-Distribution Molecular Property Prediction}

\textbf{DRD2-Hi.} Binary classification dataset of 8482 compounds with labels indicating dopamine receptor inhibition, with therapeutic relevance in schizophrenia and Parkinson’s disease; dataset obtained from ChEMBL30. \citep{mendez2019chembl}

\textbf{HIV-Hi.} Binary classification dataset of 41127 compounds from the Drug Therapeutics Program AIDS Antiviral Screen, with labels indicating the inhibition of HIV replication; dataset obtained from MoleculeNet. \citep{wu2018moleculenet}

\textbf{KDR-Hi.} Binary classification dataset with labels indicating VEGFR2 (vascular endothelial growth factor receptor 2) inhibition, a kinase target in cancer therapy, with training restricted to 500 compounds to simulate low-data regimes; dataset obtained from Chembl30. \citep{mendez2019chembl}

\textbf{Sol-Hi.} Binary classification dataset of 2173 compounds with labels indicating solubility; dataset obtained at Biogen. \citep{fang2023prospective}

For further dataset and train/test splitting details, see \citep{steshin2023lohipracticalmldrug}. Data accessed from \url{https://github.com/SteshinSS/lohi_neurips2023/tree/main/data/hi} [accessed 20.03.2023].

\subsection{Molecular Representation Learning and Property Prediction}

\textbf{ESOL.} Regression dataset containing water solubility measurements for 1128 compounds.

\textbf{FreeSolv.} Regression dataset containing experimentally measured hydration free energy values in water for 642 compounds.

\textbf{Lipophilicity.} Regression dataset containing experimentally measured octanol/water distribution coefficients (logD at pH 7.4), a key indicator of membrane permeability and solubility, for 4,200 compounds.

\textbf{BACE.} Binary classification dataset of 1513 compounds with experimentally determined qualitative binding results for a set of inhibitors of human $\beta$-secretase 1 (BACE-1).

\textbf{BBBP.} Binary classification dataset of 2039 compounds with binary labels indicating blood–brain barrier permeability.

\textbf{ClinTox.} Multitask classification dataset of 1478 compounds with labels indicating whether a compound is (i) FDA-approved and/or (ii) failed clinical trials due to toxicity reasons.

\textbf{HIV.} Binary classification dataset of 41127 compounds from the Drug Therapeutics Program AIDS Antiviral Screen, measuring inhibition of HIV replication.

\textbf{Tox21.} Multitask classification dataset of 7831 compounds with qualitative toxicity measurements across 12 biological targets, including nuclear receptors and stress response pathways.

\textbf{ToxCast.} Multitask classification dataset of 8575 compounds with qualitative toxicity results across over 600 in vitro assays, derived from high-throughput screening.

\textbf{SIDER.} Multitask classification dataset of 1427 approved drugs, with side effects grouped into 27 system organ classes according to MedDRA classifications, capturing adverse drug reactions across organ systems.

For further details, see Table 1 in \citet{wu2018moleculenet}. To ensure comparability with Uni-Mol \citep{zhou2023unimol}, we accessed data from \url{https://bioos-hermite-beijing.tos-cn-beijing.volces.com/unimol_data/finetune/molecular_property_prediction.tar.gz} [accessed 20.03.2023].

\section{Benchmark Metric Definitions}

\textbf{MAD.} Mean Absolute Deviation between predicted and target property values; lower is better.

\textbf{SD.} Standard Deviation of generated property values from the target; lower is better.

\textbf{Validity.} Fraction of syntactically valid molecules generated by the model; higher is better.

\textbf{Uniqueness.} Fraction of unique molecules among generated samples; higher is better.

\textbf{Novelty.} Fraction of generated molecules not present in the training set; higher is better.

\textbf{KL Div.\ Score.} Score based on the Kullback–Leibler Divergence between various descriptor distributions of generated and training molecules; values normalized in the range [0, 1]; higher values indicate a closer match between descriptor distributions between generated and training molecules. \citep{brown2019guacamol}

\textbf{FCD Score.} Score based on the Fréchet ChemNet Distance between the generated and reference (training) molecule embedding distributions, calculated in ChemNet feature space; values normalized in the range [0, 1]; higher values indicate closer resemblance of the generated to reference molecules. \citep{brown2019guacamol}

\textbf{Perplexity.} Exponentiated negative log-likelihood of a sequence, with the log-likelihood being calculated per token, using ProGen2-medium \citep{Torres2025.01.31.636003}; lower values indicate greater model-based plausibility of the generated peptides.

\textbf{Diversity.} Average pairwise Levenshtein distance between the generated sequences; higher values indicate greater diversity of the generated samples. For details, see Eq. 6 in \citet{kim2021survey}, where Hyformer replaces Soergel with Levenshtein distance.

\textbf{Fitness.} A measure quantifying to what extent a peptide forms a stable, amphipathic $\alpha$-helix, computed according. \citep{Li2024-pa}

\textbf{HydrAMP MIC.} The probability of a peptide being active against E.Coli bacteria strain predicted with HydrAMP. \citep{Szymczak2022.01.27.478054}

\textbf{AMPlify.} The probability of a peptide being antimicrobial predicted with AMPlify. \citep{Li2022-us}

\textbf{amPEPy.} The probability of a peptide being antimicrobial predicted with amPEPy. \citep{10.1093/bioinformatics/btaa917}

\section{Pre-training Details}\label{appendix:pre-training}

We implement $\HyF$ using a LLAMA backbone~\citep{touvron2023llama}. Depending on the size of the pretraining dataset, we scale $\HyF$ to 8.7M parameters for GuacaMol\footnote{Data accessed from \url{https://figshare.com/projects/GuacaMol/56639} on 20.03.2025.} and 50M parameters for the UniMol\footnote{Data accessed from \url{https://bioos-hermite-beijing.tos-cn-beijing.volces.com/unimol_data/finetune/molecular_property_prediction.tar.gz} on 20.03.2023.} and peptide datasets
\footnote{
Data accessed from \url{
https://app.peptipedia.cl/
}, \url{
https://www.uniprot.org/uniprotkb?query=\%28length%
}, \url{
https://ampsphere.big-data-biology.org/downloads
} and \url{ https://drive.google.com/drive/folders/1krim1ugqNDmgmHZCFSOvmynWxCSzyOto
} on 17.04.2025 with train/test set constructed using standard scikit’s train/test splitting and random seed 44.
}. These configurations align model capacity with dataset size and ensure a fair comparison with prior work: the 8.7M model is comparable to MolGPT~\citep{bagal2022molgpt}, while the 50M variant matches the scale of Uni-Mol~\citep{zhou2023unimol} and Graph2Seq~\citep{gao2024graph}. For GuacaMol, we apply 2× data augmentation using non-canonical SMILES enumeration~\citep{bjerrum2017smilesenumerationdataaugmentation, arus2019randomized} to increase molecular diversity. All models are pretrained using pre-computed molecular descriptors~\citep{yang2019analyzing}. The balancing of the tasks $(p_{\LMToken}, p_{\MLMToken}, p_{\PredToken})$ is set to (0.90, 0.05, 0.05) and (0.80, 0.10, 0.10), respectively.

We use SMILES~\citep{weiningerSMILESChemicalLanguage1988} or amino acid sequences as molecular representations across all experiments. For tokenization, we adopt an extended character-level tokenizer for SMILES, based on \citet{schwaller_probst_vaucher_nair_kreutter_laino_reymond_2020}, and use the ESM-2 tokenizer~\citep{esm2} for peptides.

We pre-train $\HyF$ using a batch size of 1024 for up to 50K or 250K iterations, depending on model size. Training is performed with the AdamW optimizer ($\beta_1 = 0.9$, $\beta_2 = 0.95$, $\epsilon = 1 \times 10^{-5}$, weight decay = $1 \times 10^{-1}$), using a peak learning rate of $6 \times 10^{-4}$ with cosine decay and 5000 warm-up steps. We use gradient clipping with a maximum norm of 1.0. All input sequences are padded to a fixed length of 128 tokens. Training is conducted using bfloat16 precision on a single NVIDIA H100 80GB HBM3 GPU. 

\begin{table}[h]
\caption{Architectural details of $\HyF$.}\label{table:architectural-details}
\vskip 0.15in
\begin{center}
\begin{scriptsize}
    \begin{sc}
\begin{tabular}{lccccc}
\toprule
 Num. param. & Embed.\ dim & Hidden dim & \#Layers & \# Att.\ Heads \\
\midrule
8.7M & 256 & 1024 & 8 & 8 \\
50M & 512 & 2048 & 12 & 8 \\
\bottomrule
\end{tabular}
\end{sc}
\end{scriptsize}
\end{center}
\vskip -0.1in
\end{table}

\section{Experimental Details}\label{appendix:experimental-details}

All fine-tuning and inference is conducted using float32 precision on a single NVIDIA V100 32GB GPU.

\subsection{Conditional Molecule Generation}\label{appendix:conditional-molecule-generation-task}

We jointly fine-tune $\HyF$, pretrained on GuacaMol dataset, for 10 epochs with a batch size of 256. The peak learning rate is selected from the set $\{1\mathrm{e}{-4}, 2\mathrm{e}{-4}, 3\mathrm{e}{-4}, 4\mathrm{e}{-4}, 5\mathrm{e}{-4}, 5\mathrm{e}{-4}, 6\mathrm{e}{-4}\}$, based on root mean squared error (RMSE) with respect to the target property. During fine-tuning, we set the task probability vector to $(p_{\LMToken}, p_{\PredToken}) = (0.5, 0.5)$ and do not perform hyperparameter search over this setting, as it yields satisfactory performance by default. For the non-joint variant of $\HyF$, we freeze the pretrained model and fine-tune only the prediction head. This avoids catastrophic forgetting of the generative capability when removing the generative loss during training. For each target property value, we sample 100K unique molecules, with a wall-clock time of 78 $\pm$ 1 seconds, and retain those passing a manually defined threshold, using multinomial top-$k$ sampling with $\tau=0.9$ and $k=10$. Note that reported SA scores are normalized, following \citep{gao2024graph}.

To further characterize the selectivity of conditional sampling and the calibration of the predictive heads, we report acceptance rates in the conditional molecule generation experiment in Table~\ref{tab:conditional_acceptance}. The results confirm that conditional sampling with \textsc{Hyformer} is highly selective across all target values.

{\color{blue}\begin{table}[h!]
\caption{Conditional generative performance on GuacaMol dataset across all targets. Best model is marked \textbf{bold}.}
\begin{center}
\begin{small}
\begin{sc}
\begin{adjustbox}{width=\columnwidth}
\begin{tabular}{ccclcccccccccc}
\toprule
& Pretrain & Joint & Metric & QED=0.5 & QED=0.7 & QED=0.9 & SA=0.7 & SA=0.8 & SA=0.9 & logP=0.0 & logP=2.0 & logP=4.0 & Avg. \\
\midrule
\parbox[t]{2mm}{\multirow{3}{*}{\scriptsize\rotatebox[origin=c]{90}{MolGPT}}} & \parbox[t]{2mm}{\multirow{3}{*}{\ding{55}}} & \parbox[t]{2mm}{\multirow{3}{*}{\ding{55}}}
& MAD $\downarrow$ & 0.081 & 0.082 & 0.097 & 0.024 & 0.019 & 0.013 & 0.304 & 0.239 & 0.286 & 0.127 \\
& & & SD $\downarrow$  & 0.065 & 0.066 & 0.092 & 0.022 & 0.016 & 0.013 & 0.295 & 0.232 & 0.258 & 0.118 \\
& & & Validity $\uparrow$ & 0.985 & 0.985 & 0.984 & 0.975 & 0.988 & 0.995 & 0.982 & 0.983 & 0.982 & 0.984 \\
\midrule
\parbox[t]{2mm}{\multirow{6}{*}{\scriptsize\rotatebox[origin=c]{90}{GraphGPT-1W-C}}} & \parbox[t]{2mm}{\multirow{3}{*}{\ding{55}}} & \parbox[t]{2mm}{\multirow{3}{*}{\ding{55}}} & MAD $\downarrow$ & 0.041 & 0.031 & 0.077 & 0.012 & 0.028 & 0.031 & 0.103 & 0.189 & 0.201 & 0.079 \\
& & & SD $\downarrow$  & 0.079 & 0.077 & 0.121 & 0.055 & 0.062 & 0.070 & 0.460 & 0.656 & 0.485 & 0.229 \\
& & & Validity $\uparrow$ & 0.988 & 0.995 & 0.991 & 0.995 & 0.991 & 0.998 & 0.980 & \textbf{0.992} & 0.991 & 0.991 \\
\cmidrule(lr){2-14}
& \parbox[t]{2mm}{\multirow{3}{*}{\ding{51}}} & \parbox[t]{2mm}{\multirow{3}{*}{\ding{55}}} & MAD $\downarrow$ & 0.032 & 0.033 & 0.051 & \textbf{0.002} & 0.009 & 0.022 & \textbf{0.017} & 0.190 & 0.268 & 0.069 \\
&&& SD $\downarrow$  & 0.080 & 0.075 & 0.090 & 0.042 & 0.037 & 0.062 & 0.463 & 0.701 & 0.796 & 0.261 \\
&&& Validity $\uparrow$ & \textbf{0.996} & \textbf{0.998} & \textbf{0.999} & \textbf{0.995} & \textbf{0.999} & 0.996 & 0.994 & 0.990 & 0.992 & \textbf{0.995} \\
\midrule
\parbox[t]{2mm}{\multirow{6}{*}{\scriptsize\rotatebox[origin=c]{90}{Hyformer}}} & \parbox[t]{2mm}{\multirow{3}{*}{\ding{51}}} & \parbox[t]{2mm}{\multirow{3}{*}{\ding{55}}} & 
    MAD $\downarrow$ & 0.035 (0.000) & 0.032 (0.001) & 0.027 (0.007) & 0.020 (0.001) & 0.016 (0.000) & 0.009 (0.001) & 0.131 (0.012) & 0.135 (0.007) & 0.127 (0.011) & 0.059 (0.004) \\
&&& SD $\downarrow$  & 0.049 (0.000) & 0.046 (0.002) & 0.039 (0.010) & 0.027 (0.002) & 0.021 (0.000) & 0.012 (0.001) & 0.162 (0.015) & 0.174 (0.010) & 0.175 (0.016) & 0.078 (0.006) \\
&&& Validity $\uparrow$ & 0.993 (0.003) & 0.993 (0.003) & 0.993 (0.004) & 0.986 (0.009) & 0.985 (0.001) & \textbf{0.999 (0.002)} & 0.978 (0.031) & 0.983 (0.004) & \textbf{0.995 (0.007)} & 0.989 (0.007) \\
\cmidrule(lr){2-14}
& \parbox[t]{2mm}{\multirow{3}{*}{\ding{51}}} & \parbox[t]{2mm}{\multirow{3}{*}{\ding{51}}} 
    & MAD $\downarrow$ & \textbf{0.010 (0.001)} & \textbf{0.009 (0.000)} & \textbf{0.006 (0.001)} & 0.008 (0.001) & \textbf{0.005 (0.000)} & \textbf{0.001 (0.000)} & 0.033 (0.005) & \textbf{0.044 (0.001)} & \textbf{0.046 (0.001)} & \textbf{0.018 (0.001)} \\
  &&& SD $\downarrow$ & \textbf{0.018 (0.002)} & \textbf{0.018 (0.002)} & \textbf{0.010 (0.003)} & \textbf{0.015 (0.003)} & \textbf{0.009 (0.002)} & \textbf{0.004 (0.000)} & \textbf{0.037 (0.007)} & \textbf{0.057 (0.003)} & \textbf{0.059 (0.001)} & \textbf{0.025 (0.003)} \\
  &&& Validity $\uparrow$ & 0.983 (0.009) & 0.990 (0.006) & 0.996 (0.005) & 0.976 (0.005) & 0.981 (0.002) & \textbf{0.999 (0.002)} & \textbf{1.000 (0.000)} & 0.991 (0.007) & 0.971 (0.011) & 0.987 (0.005) \\
\bottomrule
\end{tabular}
\end{adjustbox}
\end{sc}
\end{small}
\end{center}
\end{table}}

\begin{algorithm}[h]
	\caption{Conditional sampling with $\HyF$}
    \label{algorithm:jointformer-sampling}
    \let\AND\relax
	\begin{algorithmic}[1]
		\REQUIRE Number of examples to sample $K$, batch size $B$, condition $Y$, model parameters $\theta$.
        \STATE $\mathcal{D}_{sampled} = \emptyset$
        \WHILE{$|\mathcal{D}_{sampled}| < K$}
        \STATE Sample $B$ many examples $(\x, y) \sim \joint$
        \STATE Accept examples $\mathcal{D}_{batch} = \{(\x, y) \mid y \in Y\}$ 
        \STATE Append dataset $\mathcal{D}_{sampled} = \mathcal{D}_{sampled} \cup \mathcal{D}_{batch}$  
        \ENDWHILE
	\end{algorithmic} 
\end{algorithm}

\begin{table}[h]
\centering
\caption{Number of accepted samples per 100{,}000 generated molecules in the conditional generation experiment. Mean and standard deviation across three random seeds. The average (Avg.) is computed over all target~values.}
\label{tab:conditional_acceptance}
\begin{scriptsize}
\begin{adjustbox}{width=\columnwidth}
\begin{tabular}{lcccccccccc}
\toprule
Model 
& QED=0.5 
& QED=0.7 
& QED=0.9 
& SA=0.7 
& SA=0.8 
& SA=0.9 
& logP=0.0 
& logP=2.0 
& logP=4.0 
& Avg. \\
\midrule
\textsc{Hyformer} (no-joint)
& 315 (7)
& 367 (11)
& 162 (11)
& 144 (5)
& 448 (15)
& 229 (23)
& 14 (3)
& 76 (7)
& 70 (2)
& 203 (81) \\
\textsc{Hyformer} (joint)
& 295 (19)
& 330 (4)
& 176 (17)
& 140 (10)
& 426 (13)
& 254 (7)
& 11 (4)
& 70 (7)
& 67 (4)
& 197 (71) \\
\bottomrule
\end{tabular}
\end{adjustbox}
\end{scriptsize}
\end{table}

\subsection{Out-of-Distribution Molecular Property Prediction Task}\label{appendix:ood-property-prediction-task}

We use $\HyF$ pre-trained on UniMol dataset and perform a grid search over hyperparameters, as detailed in Table~\ref{table:hparams-grid-search-ood}, with end-to-end joint fine-tuning, with early stopping triggered if the validation loss does not improve for 5 consecutive epochs. Results in Table~\ref{table:hi-task} are reported from \citep{steshin2023lohipracticalmldrug}.

\begin{table}[h]
\caption{Hyperparameter ranges for the grid search hyperparameter optimization on out-of-distribution molecular property prediction task.}\label{table:hparams-grid-search-ood}
\vskip 0.15in
\begin{center}
\begin{scriptsize}
\begin{sc}
\begin{tabular}{lc}
\toprule
Hyperparameter & Search Range \\
\midrule
Max Epochs & \{20, 50, 100\} \\
Batch Size & \{64, 128, 256\} \\
Learning Rate & [1e-5, 6e-4] \\
Weight Decay & [1e-2, 1e-1] \\
Pooler Dropout & [0.0, 0.2] \\
Learning Rate Decay & \{True, False\} \\
$(p_{\LMToken}, p_{\PredToken})$ & \{(0.0, 1.0), (0.1, 0.9)\} \\
\bottomrule
\end{tabular}
\end{sc}
\end{scriptsize}
\end{center}
\vskip -0.1in
\end{table}

\subsection{Molecular Representation Learning Task}\label{appendix:representation-learning-task}

For KNN probe, we use the Euclidean norm to pick K most similar molecules. For each dataset, we search the parameter K in the set $\{1,3,5,100,300,500,1000,3000,5000\}$ and pick K with the best performance on the validation split. For linear probe, we report the results of linear probe with L2 regularization added. If the validation loss between the epochs does not decrease by more than $0.0001$ for $10$ consecutive epochs, we terminate the training process early. All results in Table~\ref{table:molecular-representation-learning-task} are ours.

\subsection{Molecule Generation Task}\label{appendix:molecule-generation-task}

For generation, we use $\HyF$ pre-trained on GuacaMol and sample using multinomial top-$k$ sampling, with~$k=10$ and varying temperature $\tau=\{0.9, 1.0, 1.1\}$.

In Table~\ref{table:guacamol-distribution-learning-benchmark}, baseline results for JTVAE and MAGNeT are reported from \citep{hetzel2023magnet}, for MoLeR and MiCaM from \citep{geng2023novo}, for VAE, LSTM from \citep{brown2019guacamol}, for MolGPT from \citep{bagal2022molgpt}. 

\subsection{Molecular Property Prediction Task}\label{appendix:property-prediction-task}

We use $\HyF$ pre-trained on UniMol dataset and perform a grid search over hyperparameters, as detailed in Table~\ref{table:hparams-grid-search}, with end-to-end predictive fine-tuning run for a maximum of 20 epochs, with early stopping triggered if the validation loss does not improve for 5 consecutive epochs. Results in Table~\ref{table:molecular-property-prediction-task} are reported from \citep{zhou2023unimol, gao2024graph}.

\begin{table}[h]
\caption{Hyperparameter ranges for the grid search hyperparameter optimization on molecular property prediction task.}\label{table:hparams-grid-search}
\vskip 0.15in
\begin{center}
\begin{scriptsize}
\begin{sc}
\begin{tabular}{lc}
\toprule
Hyperparameter & Search Range \\
\midrule
Batch Size & \{16, 64, 128, 256\} \\
Learning Rate & [1e-5, 1e-3] \\
Weight Decay & [1e-2, 3e-1] \\
Pooler Dropout & [0.0, 0.2] \\
Learning Rate Decay & \{True, False\} \\
\bottomrule
\end{tabular}
\end{sc}
\end{scriptsize}
\end{center}
\vskip -0.1in
\end{table}

\subsection{Antimicrobial Peptide Design}\label{appendix:antimicrobial-peptide-design}

\paragraph{Dataset} We construct a general-purpose peptide dataset and an AMP-specific dataset. For the general purpose dataset, we collect 3459247 peptide sequences with length 8-50 from the combined Peptipedia \citep{Cabas-Mora2024.07.11.603053} and UniProt \citep{10.1093/nar/gkae1010} datasets and apply CDHIT filtering with a similarity threshold of 90\%. For the AMP-specific dataset, we collect 1056321 sequences from combining the Peptipedia \citep{Cabas-Mora2024.07.11.603053}, filtered with Antigram (-), Antigram (+), Antibacterial and Antimicrobial keywords, Uniprot with the keywords antimicrobial and AMPSphere \citep{santos_junior_2022_6511404}, and  applying CDHIT filtering with a similarity threshold of 90\%.

\paragraph{Pre-trainig} We pre-train $\HyF$ in a two-stage manner, by first training on the general-purpose, followed by training on the AMP specific dataset with peak learning rate equal to $4\mathrm{e}{-4}$. All additional details follow Appendix~\ref{appendix:pre-training}.

\paragraph{Fine-tuning} We fine-tune $\HyF$ for a maximum of 10 epochs, with batch size 64, peak learning rate $5\mathrm{e}{-5}$ and early stopping, with task probabilities $(p_{\LMToken}, p_{\PredToken})$ equal to (0.6, 0.4). Additionally, we freeze the first four layers of the model. 

\paragraph{Conditional Sampling} For the antimicrobial peptide design experiment, we unconditionally sample around 6.7M sequences and accept 50K of them, which results in an acceptance rate of around 0.7\%. Note that our sampling procedure intentionally prioritizes high selectivity over throughput. All peptides have a maximum length of 50 AAs.

\section{Additional Experiments}\label{appendix:additional-experiments}

\subsection{Unconditional Molecule Generation on MOSES benchmark}

\begin{table}
\vspace{-2ex}
\caption{Unconditional generative performance on MOSES benchmark.
The best model in each category is marked \textbf{bold}.}
\label{table:moses-distribution-learning-benchmark}
\begin{center}
\begin{scriptsize}    
\begin{sc}
\begin{tabular}{lccccc}
\toprule
Model & Validity $\uparrow$ & Unique $\uparrow$ & Novelty $\uparrow$ & IntDiv1 $\uparrow$ & IntDiv2 $\uparrow$ \\
\midrule
\multicolumn{6}{l}{\textit{Unconditional}} \\
HMM            & 0.076 & 0.567 & \textbf{0.999} & 0.847 & 0.810 \\
NGram          & 0.238 & 0.922 & 0.969 & \textbf{0.874} & 0.864 \\
Combinatorial  & \textbf{1.000} & 0.991 & 0.988 & 0.873 & \textbf{0.867} \\
CharRNN        & 0.975 & 0.999 & 0.842 & 0.856 & 0.850 \\
VAE            & 0.977 & 0.998 & 0.695 & 0.856 & 0.850 \\
AEE            & 0.937 & 0.997 & 0.793 & 0.856 & 0.850 \\
LatentGAN      & 0.897 & 0.997 & 0.949 & 0.857 & 0.850 \\
JT-VAE         & \textbf{1.000} & 0.999 & 0.914 & 0.855 & 0.849 \\
MolGPT         & 0.994 & \textbf{1.000} & 0.797 & 0.857 & 0.851 \\
$\textsc{Hyformer}_{\tau=0.9}$ \hh & \hh 0.996 & \hh \textbf{1.000} & \hh 0.701 & \hh 0.851 & \hh 0.845 \\
$\textsc{Hyformer}_{\tau=1.0}$ \hh & \hh 0.991 & \hh \textbf{1.000} & \hh 0.749 & \hh 0.856 & \hh 0.850 \\
$\textsc{Hyformer}_{\tau=1.1}$ \hh & \hh 0.986 & \hh \textbf{1.000} & \hh 0.791 & \hh 0.861 & \hh 0.855 \\
\midrule
\multicolumn{6}{l}{\textit{Few-Shot}} \\
$\text{GraphGPT-1W}_{s=0.25}$     & \textbf{0.995} & 0.995 & 0.255 & 0.854 & 0.850 \\
$\text{GraphGPT-1W}_{s=0.5}$      & 0.993 & 0.996 & 0.334 & 0.856 & 0.848 \\
$\text{GraphGPT-1W}_{s=1.0}$     & 0.978 & 0.997 & 0.871 & \textbf{0.860} & \textbf{0.857} \\
$\text{GraphGPT-1W}_{s=2.0}$      & 0.972 & \textbf{1.000} & \textbf{1.000} & 0.850 & 0.847 \\
\bottomrule
\end{tabular}
\end{sc}
\end{scriptsize}
\end{center}
\vspace{-2ex}
\end{table}

To additionally evaluate the unconditional generative performance of $\HyF$, we perform an evaluation on the MOSES benchmark. Analogously to unconditional molecule generation in Section~\ref{section:unconditional-molecule-generation}, we scale $\HyF$ to 8.5M parameters and follow all the training details in Appendix~\ref{appendix:pre-training} for GuacaMol dataset. We compare $\HyF$, across various sampling temperatures $\tau$, to baseline unconditional and few-shot generative models, as reported in \citep{gao2024graph}.

$\HyF$ successfully generates valid, unique, novel and diverse molecules, rivaling other unconditional and few-shot generative models.

\subsection{Qualitative Evaluation of Generated Molecules}

To investigate the effect of sampling temperature on the structural diversity and chemical quality of generated molecules, we show molecules sampled in the unconditional generation task (Section~\ref{section:unconditional-molecule-generation}), at temperatures $\tau  = 0.9$, $1.0$, and $1.1$. For each sampled molecule, we additionally report four chemical properties: molecular partition coefficient (LogP), topological polar surface area (TPSA), quantitative estimate of drug-likeness (QED) and molecular weight (MW). At $\tau  = 0.9$, the model generates drug-like molecules, with the majority exhibiting QED $\geq$ 0.7 and MW $<$ 500~g/mol (Fig.~\ref{fig:T09generatedmolecules}). At $\tau  = 1.0$, the sampling process yields molecules with greater structural diversity (Fig.~\ref{fig:T1.0generatedmolecules}). Despite the increased exploration of chemical space, some molecules exhibit lower QED values. At $\tau  = 1.1$, the model produces molecules with less common substituent patterns. Some of these structures exceed traditional drug-likeness thresholds, such as MW $>$ 500~g/mol or LogP $>$ 5, according to Lipinski’s Rule of Five (Fig.~\ref{fig:T1.1generatedmolecules}). Additionally, we investigate molecules generated in the conditional generation task in Section~\ref{subsubsection:experiment-conditional-molecule-generation} (Figure~\ref{fig:generatedmoleculesconditionedQED}, \ref{fig:generatedmoleculesconditionedSA}~and~\ref{fig:generatedmoleculesconditionedlogP}).

\begin{figure}[!h]
    \centering
    \includegraphics[width=0.75\columnwidth]{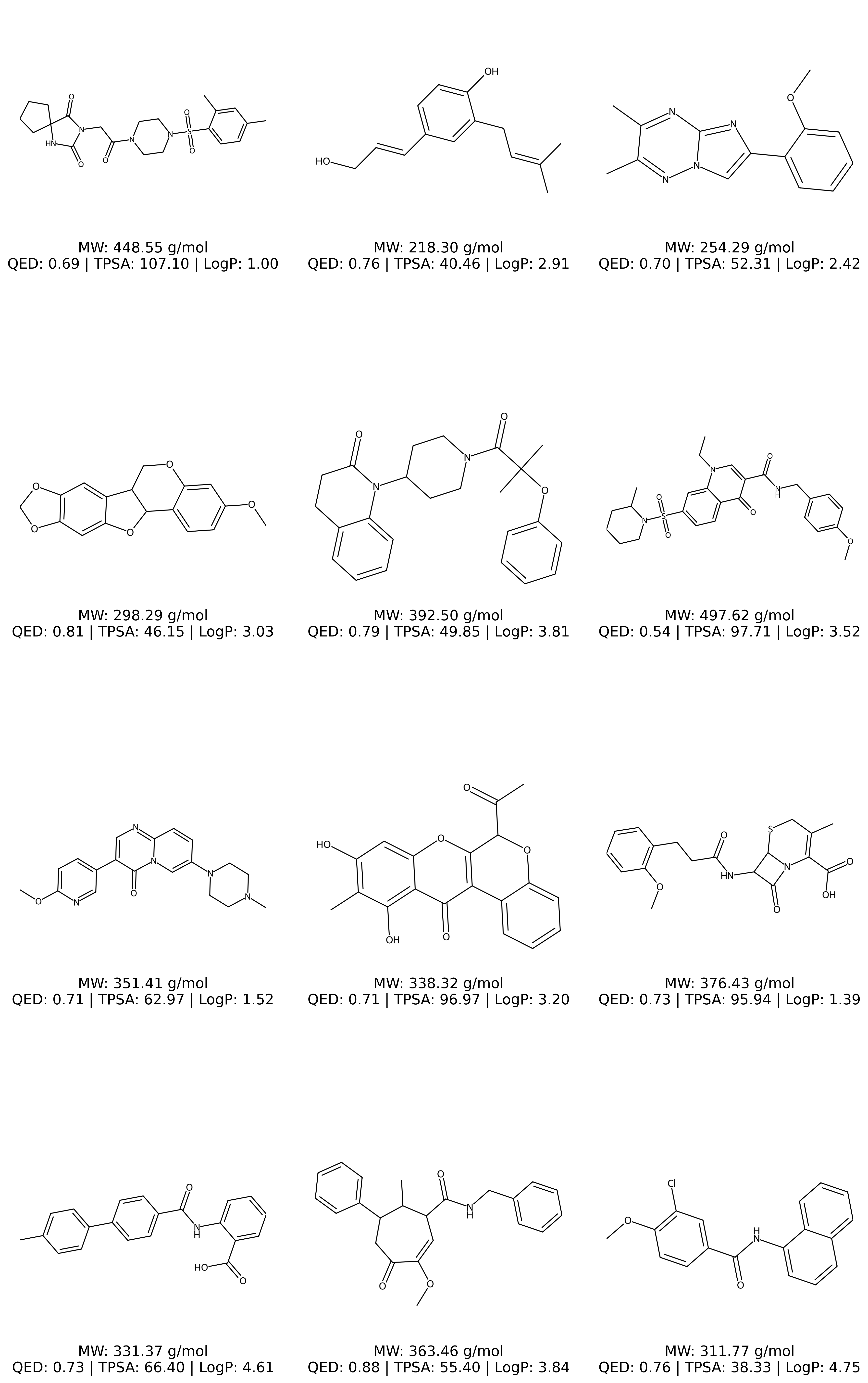}
    \caption{Structures of the twelve generated molecules with Hyformer when the sampling temperature is 0.9, visualized using RDKit, together with their properties.
    }
    \label{fig:T09generatedmolecules}
\end{figure}

\begin{figure}[!h]
    \centering
    \includegraphics[width=0.75\columnwidth]{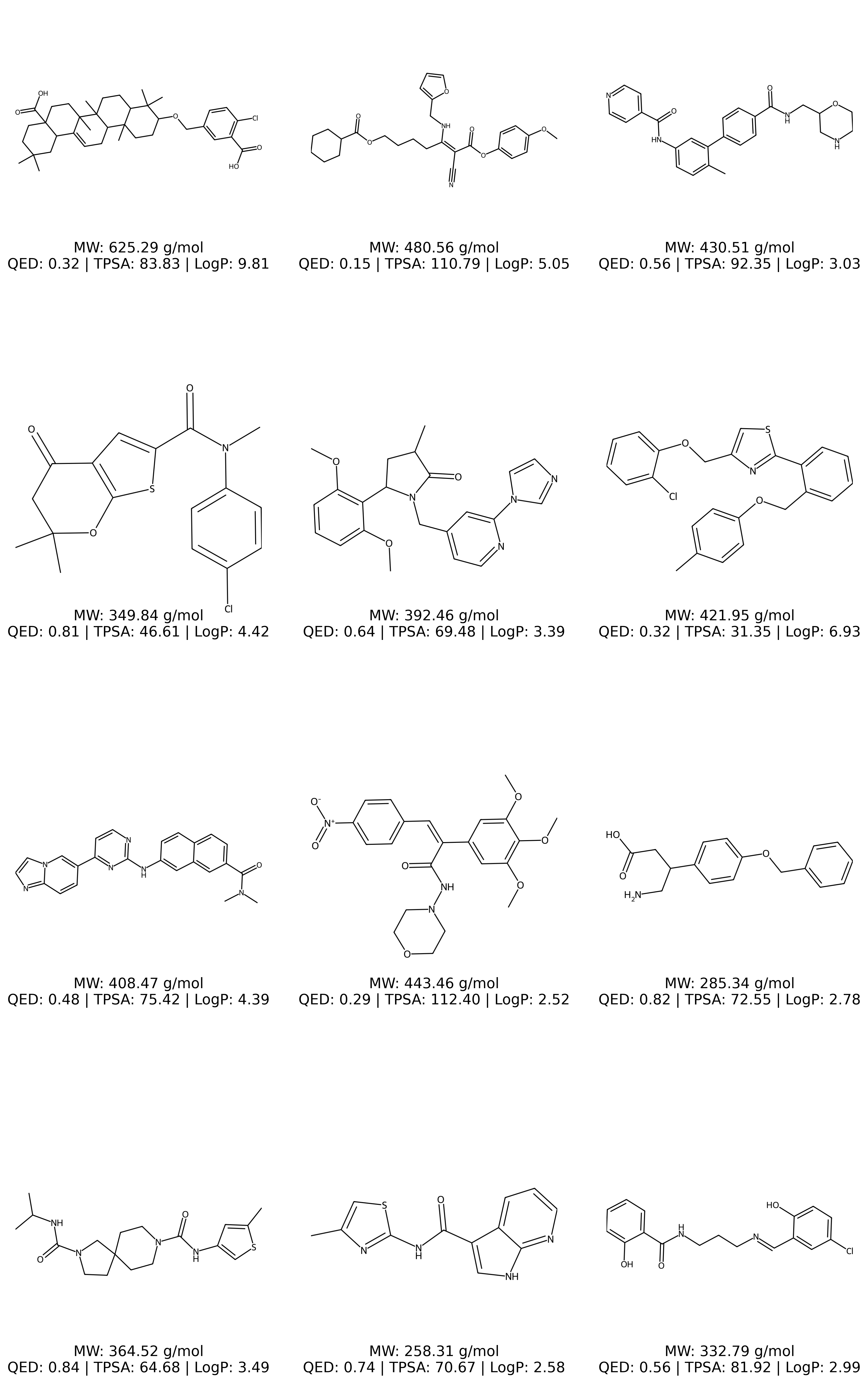}
    \caption{Structures of the twelve generated molecules with Hyformer when the sampling temperature is 1.0, visualized using RDKit, together with their properties.
    }
    \label{fig:T1.0generatedmolecules}
\end{figure}

\begin{figure}[!h]
    \centering
    \includegraphics[width=0.75\columnwidth]{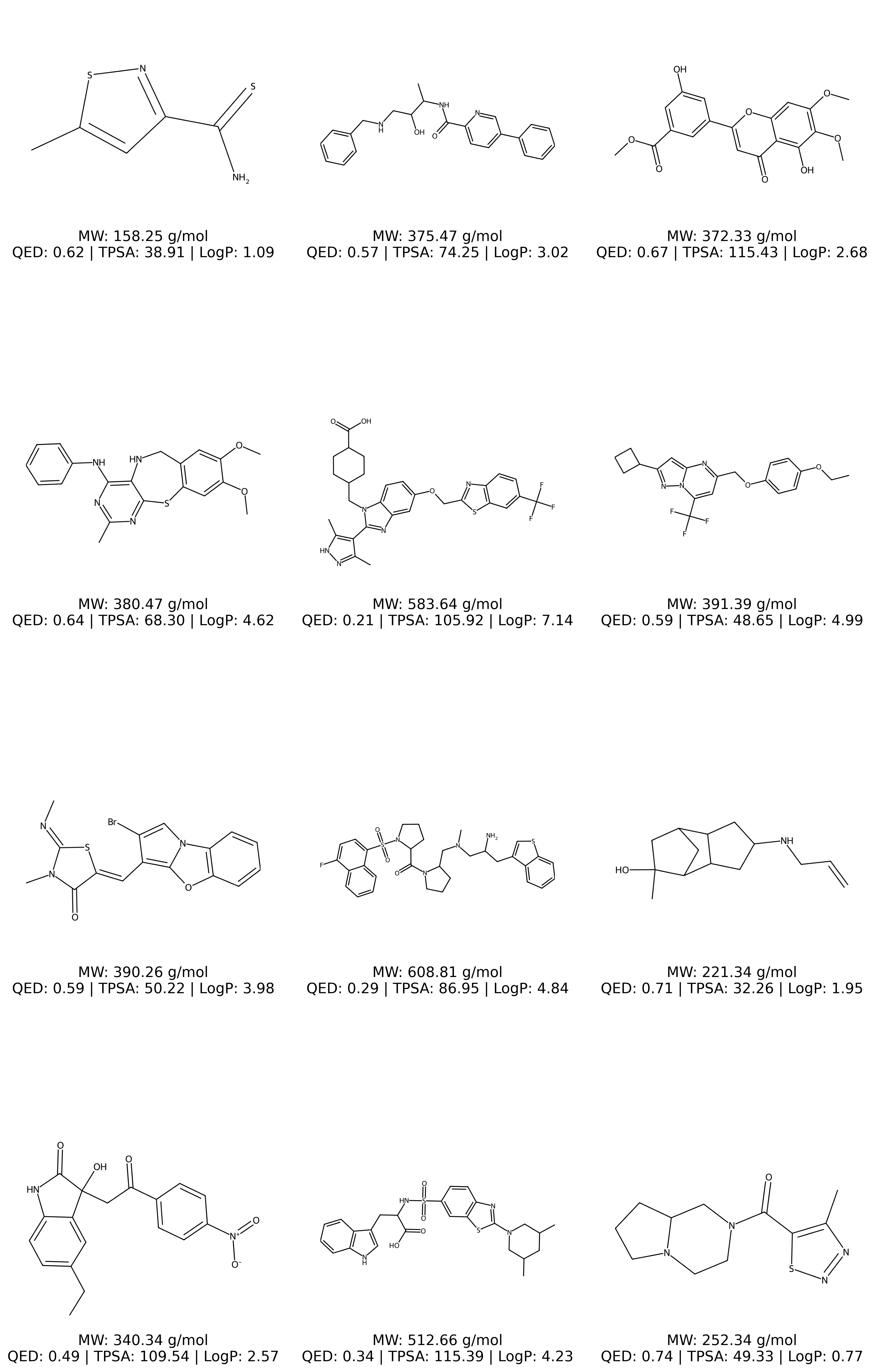}
    \caption{Structures of the twelve generated molecules with Hyformer when the sampling temperature is 1.1, visualized using RDKit, together with their properties.
    }
    \label{fig:T1.1generatedmolecules}
\end{figure}

\begin{figure}[!h]
    \centering
    \includegraphics[width=\columnwidth]{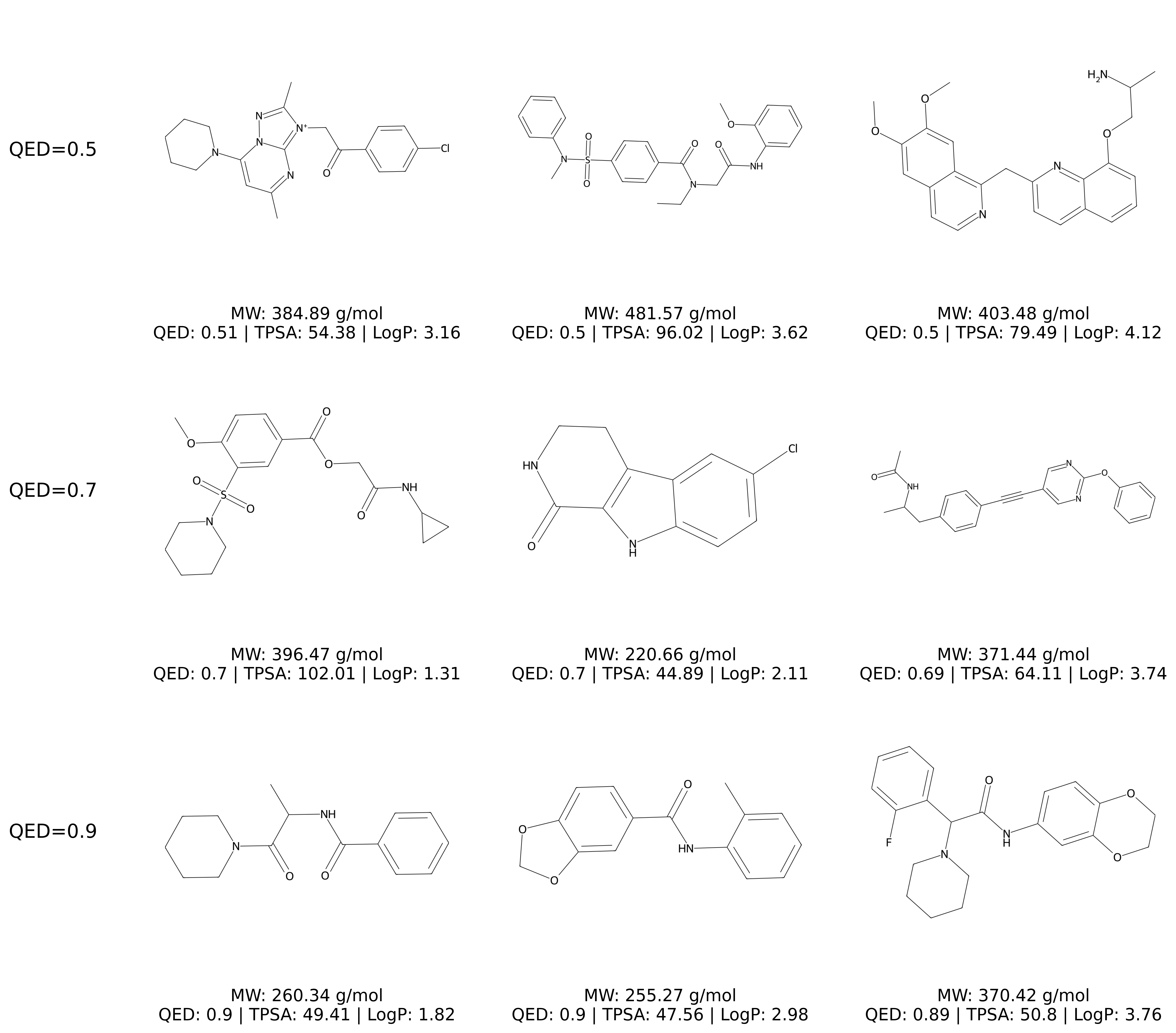}
    \caption{Structures of molecules generated by Hyformer conditioned on QED values, visualized using RDKit, along with their chemical properties.
    }
    \label{fig:generatedmoleculesconditionedQED}
\end{figure}

\begin{figure}[!h]
   \centering
   \includegraphics[width=\columnwidth]{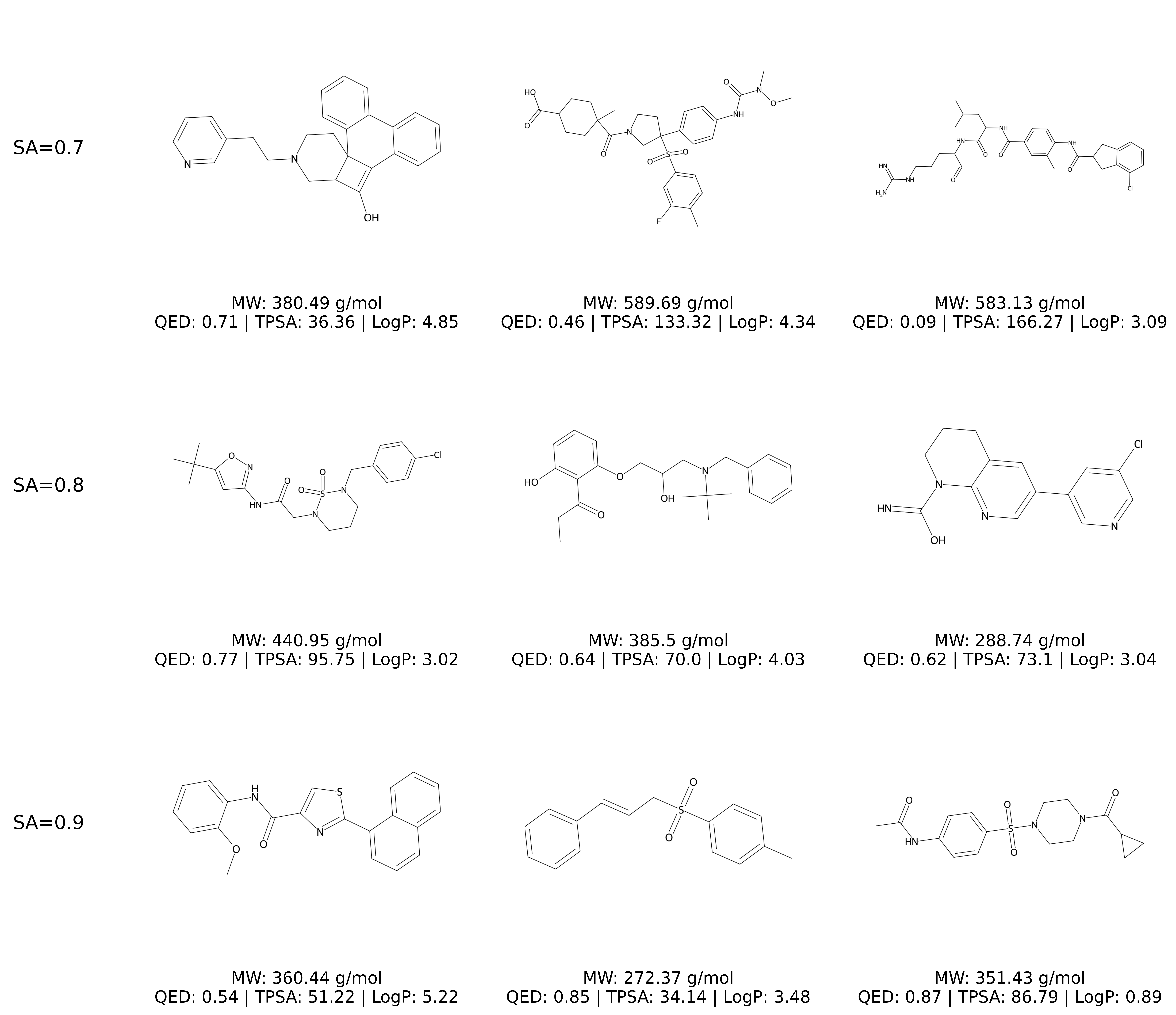}
   \caption{Structures of molecules generated by Hyformer conditioned on SA score, visualized using RDKit, along with their chemical properties.
   }
   \label{fig:generatedmoleculesconditionedSA}
\end{figure}

\begin{figure}[!h]
    \centering
    \includegraphics[width=\columnwidth]{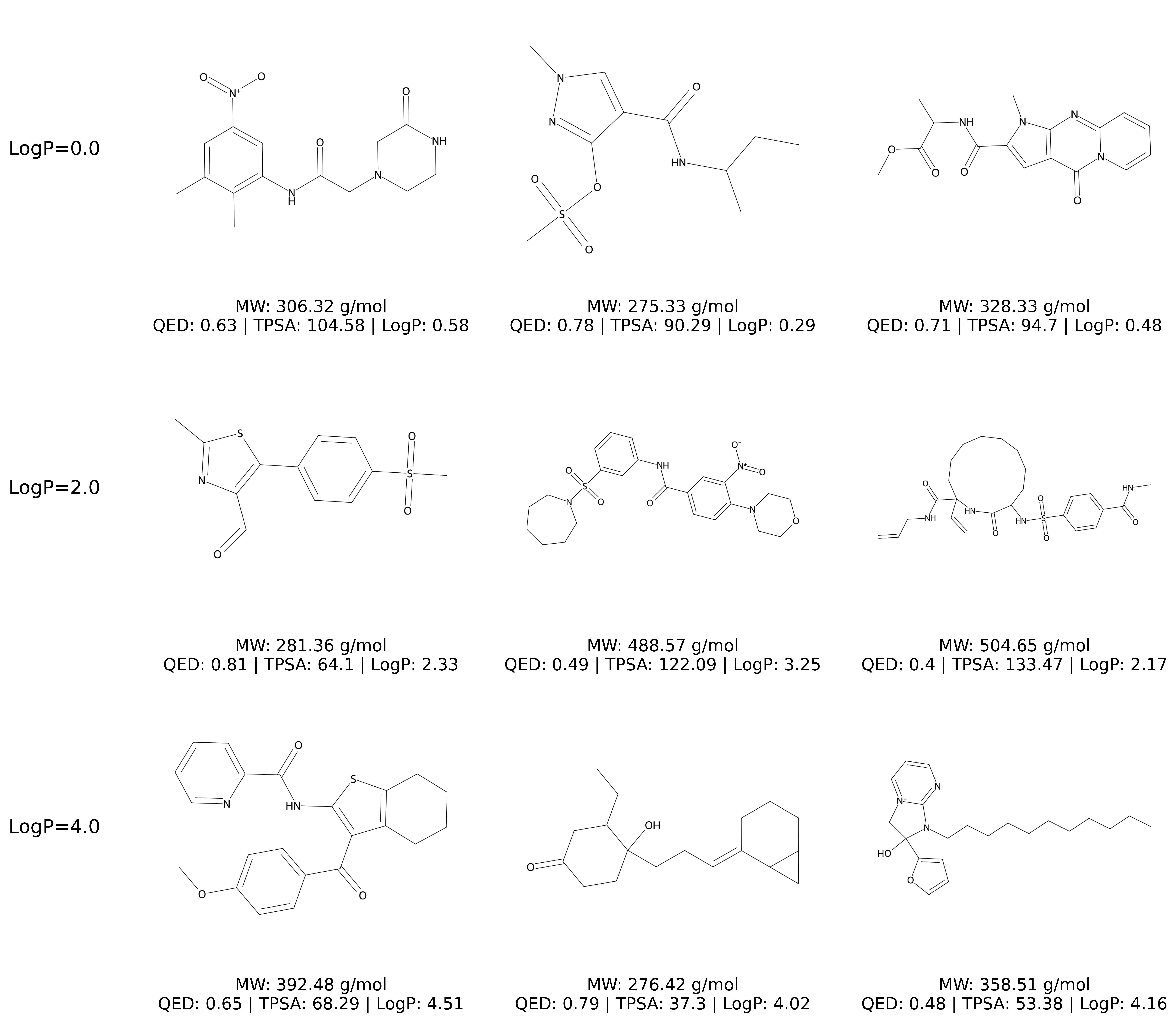}
    \caption{Structures of molecules generated by Hyformer conditioned on LogP values, visualized using RDKit, along with their chemical properties.
    }
    \label{fig:generatedmoleculesconditionedlogP}
\end{figure}

\subsection{Qualitative Evaluation of Learned Representations}

We next examine the Hyformer embeddings in the context of the chemical properties of the molecules (Fig.~\ref{fig:hyformer_2d_embeddings}). To this end, we randomly sample 20,000 molecules and pass them through $\HyF$’s encoder, pre-trained for molecular property prediction in Section~\ref{section:molecular-property-prediction}, to obtain molecule embeddings. We visualize the embeddings in two dimensions through principal components analysis (PCA) and color them according to their four chosen chemical properties (LogP, TPSA, QES, MW).

Qualitatively, the spatial arrangement of molecules is clearly connected to their chemical properties. Furthermore, embeddings exhibit a smooth profile of change w.r.t.\ each property. These observations indicate that $\HyF$ learns well-behaved, information-rich molecular representations.

\begin{figure}[!ht]
\begin{center}
\centerline{\includegraphics[width=\columnwidth]{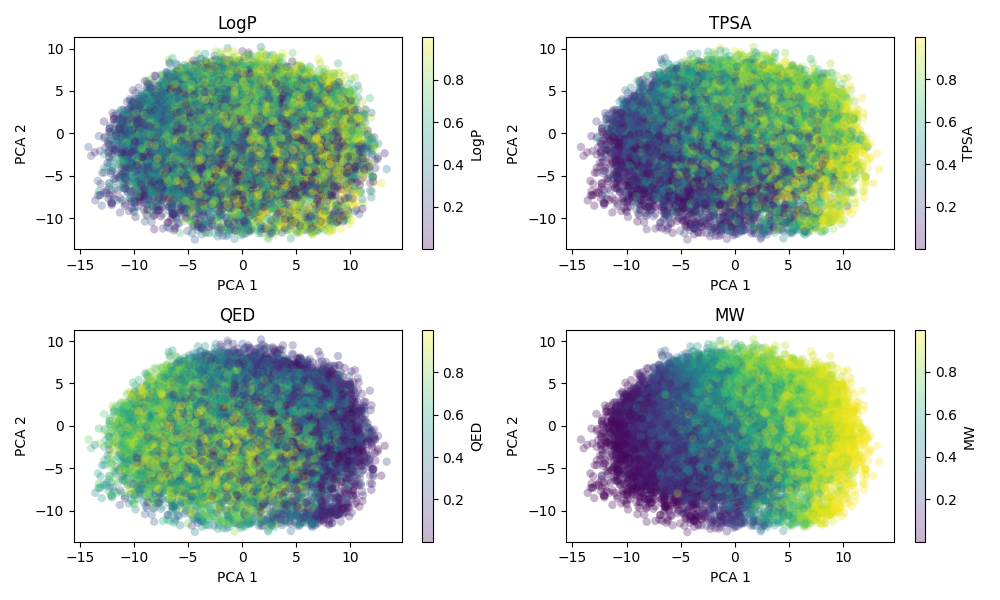}}
\caption{Hyformer's molecular embeddings. The considered chemical properties are normalized to lie in the $[0, 1]$ interval.}
\label{fig:hyformer_2d_embeddings}
\end{center}
\vskip -0.3in
\end{figure}

\end{document}